\newtheorem{example}{Example}
\crefname{paragraph}{paragraph \nameref}{paragraphs \nameref}
\Crefname{paragraph}{Paragraph \nameref}{Paragraphs \nameref}
\newtheorem{proposition}{Proposition}
\crefname{prop}{Proposition}{Propositions}
\Crefname{prop}{Proposition}{Propositions}
\crefname{defn}{Definition}{Definitions}
\Crefname{defn}{Definition}{Definitions}
\crefname{cor}{Corollary}{Corollaries}
\Crefname{cor}{Corollary}{Corollaries}
\crefname{exmpl}{Example}{Examples}
\Crefname{exmpl}{Example}{Examples}
\Crefname{theorem}{Theorem}{Theorem}
\newtheoremstyle{nutstyle}
{3.0pt} 
{3.0pt} 
{} 
{} 
{\bfseries} 
{.} 
{.5em} 
{} 
\theoremstyle{nutstyle}
\newtheoremstyle{nuxstyle}
{3.0pt} 
{3.0pt} 
{} 
{} 
{\itshape} 
{.} 
{.5em} 
{} 
\theoremstyle{nuxstyle}
\newcommand{\todoF}[2]{}
\newcommand{\fml}[1]{{\mathcal{#1}}}
\newcommand{\tn}[1]{\textnormal{#1}}
\newcommand{\mbf}[1]{\ensuremath\mathbf{#1}}
\newcommand{\msf}[1]{\ensuremath\mathsf{#1}}
\newcommand{\mbb}[1]{\ensuremath\mathbb{#1}}
\newcommand{\relevant}{\oper{Relevant}}
\newcommand{\irrelevant}{\oper{Irrelevant}}
\newcommand{\exv}{\ensuremath\mathbf{E}}
\newcommand{\prob}{\ensuremath\mathbf{P}}
\newcommand{\cf}{\ensuremath\upsilon} 
\newcommand{\cfn}[1]{\ensuremath\upsilon_{#1}} 
\newcommand{\cfd}[1]{\ensuremath\upsilon^{d}_{#1}} 
\DeclareMathOperator*{\sv}{\msf{Sc}}
\newcommand{\svn}[1]{\msf{Sc}_{#1}}
\newcommand{\svd}[1]{\msf{Sc}^{d}_{#1}}
\newcommand{\tsv}{\msf{TSc}}
\newcommand{\tsvn}[1]{\msf{TSc}_{#1}}
\newcommand{\critic}{\ensuremath\mathsf{Crit}}
\newcommand{\pidx}{\msf{Idx}}
\newcommand{\piterms}{feature importance scores\xspace}
\newcommand{\pitermscap}{Feature Importance Scores\xspace}
\newcommand{\piacronympl}{FISs\xspace}
\newcommand{\piacronym}{FIS\xspace}
\definecolor{darkslategray}{rgb}{0.18, 0.31, 0.31} 
\definecolor{platinum}{rgb}{0.9, 0.89, 0.89} 
\definecolor{gray}{rgb}{.4,.4,.4}
\definecolor{midgrey}{rgb}{0.5,0.5,0.5}
\definecolor{middarkgrey}{rgb}{0.35,0.35,0.35}
\definecolor{darkgrey}{rgb}{0.3,0.3,0.3}
\definecolor{darkred}{rgb}{0.7,0.1,0.1}
\definecolor{midblue}{rgb}{0.2,0.2,0.7}
\definecolor{darkblue}{rgb}{0.1,0.1,0.5}
\definecolor{defseagreen}{cmyk}{0.69,0,0.50,0}
\newcommand{\jnoteF}[1]{}
\newcommand{\anoteF}[1]{}
\newcommand{\oper}[1]{\ensuremath\textnormal{\small{\textsf{#1}}}}
\newcounter{tableeqn}[table]
\DeclareMathOperator*{\lequiv}{\leftrightarrow}
\DeclareMathOperator*{\limply}{\rightarrow}
\newcommand{\waxp}{\ensuremath\mathsf{WAXp}}
\newcommand{\wcxp}{\ensuremath\mathsf{WCXp}}
\newcommand{\axp}{\ensuremath\mathsf{AXp}}
\newcommand{\cxp}{\ensuremath\mathsf{CXp}}
\newcommand{\wc}{\ensuremath\mathsf{WC}}
\newcommand{\mailtodomain}[1]{\href{mailto:#1@ciencias.ulisboa.pt}{\texttt{\nolinkurl{#1}}}}
\titleformat{\paragraph}[runin]
{\normalfont\bfseries}{}{0pt}{}
\newcolumntype{L}[1]{>{\raggedright\let\newline\\\arraybackslash\hspace{0pt}}m{#1}}
\newcolumntype{C}[1]{>{\centering\let\newline\\\arraybackslash\hspace{0pt}}m{#1}}
\newcolumntype{R}[1]{>{\raggedleft\let\newline\\\arraybackslash\hspace{0pt}}m{#1}}
\titlespacing{\paragraph}{%
  0pt}{
  0.175\baselineskip}{
  1em}
\newcommand{\cmark}{\mbox{\ding{51}}}
\newcommand{\xmark}{\mbox{\ding{55}}}
\title{From SHAP Scores to Feature Importance Scores}
\author{%
Olivier L\'{e}toff\'{e}$^{1}$\and
Xuanxiang Huang$^{2}$\and
Nicholas Asher$^{3}$\And
Joao Marques-Silva$^{4}$\\
\affiliations
$^1$IRIT, University of Toulouse, France\and
$^2$CNRS@CREATE, Singapore\\
$^3$IRIT, CNRS, France\and
$^4$ICREA, University of Lleida, Spain\\
\emails
olivier.letoffe@univ-toulouse.fr,
xuanxiang.huang@cnrsatcreate.sg,
asher@irit.fr,
jpms@icrea.cat
}
\begin{document}

\maketitle

%
\begin{abstract}
  A central goal of eXplainable Artificial Intelligence (XAI) is
  to assign relative importance to the features of a Machine Learning
  (ML) model given some prediction.
  The importance of this task of explainability by feature attribution
  is illustrated by the ubiquitous recent use of tools such as SHAP
  and LIME.
  %
  %
  Unfortunately, the exact computation of feature attributions, using
  the game-theoretical foundation underlying SHAP and LIME, can yield
  manifestly unsatisfactory results, that tantamount to reporting 
  misleading relative feature importance. 
  Recent work targeted rigorous feature attribution, by studying
  axiomatic aggregations of features based on logic-based definitions
  of explanations by feature selection.
  This paper shows that there is an essential relationship between
  feature attribution and a priori voting power, and that those
  recently proposed axiomatic aggregations represent a few
  instantiations of the range of power indices studied in the past.
  %
  %
  Furthermore, it remains unclear how some of the most widely used
  power indices might be exploited as \piterms (\piacronympl), i.e.\
  the use of power indices in XAI, and which of these indices would be
  the best suited for the purposes of XAI by feature attribution,
  namely in terms of not producing results that could be deemed as
  unsatisfactory.
  This paper proposes novel desirable properties that \piacronympl
  should exhibit.
  In addition, the paper also proposes novel \piacronympl exhibiting
  the proposed properties.
  Finally, 
  the paper conducts 
  a rigorous analysis of the best-known power indices in terms of the
  proposed properties.
  %
\end{abstract}

\section{Introduction}
\label{sec:intro}

The requirement 
of delivering trustworthy systems of
artificial intelligence (AI) and
machine learning (ML)
is arguably one of the most consequential challenges in current
computing research and practice.
A growing number of governments have put forth proposals for regulating
the use of systems of AI/ML~(\citealp{wheo23}; \citealp{euaict24}),
some of which are in the process of becoming enforced
regulations. Many companies that are stakeholders in AI/ML are
actively involved in building trust into the operation of highly
complex systems of AI/ML. 
Furthermore, it is widely accepted that trustworthy AI/ML can only be
achieved through the comprehensive use of (trustable) methods for
explaining the operation of those complex systems of
AI/ML~\citep{alegai19}.
This critical importance of explainability justifies in part the
massive interest in eXplainable AI (XAI) in recent
years~(\citealp{muller-dsp18}; \citealp{pedreschi-acmcs19};
\citealp{xai-bk19}; \citealp{muller-ieee-proc21}).
Furthermore, in domains where the uses of AI/ML are deemed of
high-risk or are safety-critical, XAI must ensure the rigor of
computed explanations.
Unfortunately, the most widely-used methods of XAI offer no guarantees
of rigor. Unsurprisingly, several results in recent years have
unveiled critical misconceptions of informal
XAI~(\citealp{ignatiev-ijcai20}; \citealp{ms-iceccs23}).

Motivated by this state of affairs, 
a formal XAI~(\citealp{msi-aaai22}; \citealp{ms-rw22};
\citealp{darwiche-lics23}) that builds on logic-based  
definitions of explanations and that in turn is deployed using
highly efficient automated reasoners has emerged.
However, formal XAI also exhibits some drawbacks, including the
complexity of automated reasoning, but also the need to offer human
decision-makers mechanisms for summarizing information about
explanations. While non-rigorous methods of XAI have used so-called
SHAP scores~\citep{lundberg-nips17} as a mechanism to convey 
to human decision-makers information about relative feature
importance, formal XAI focused instead on finding explanations based
on feature selection, i.e.\ finding sets of features, with no apparent
solution for addressing relative feature importance.

SHAP scores have become a highly popular method in
XAI~(\citealp{molnar-bk23}; \citealp{groh-coling22};
\citealp{mishra-bk23}). 
Nevertheless, recent work has demonstrated that the
rigorous computation of SHAP scores can be thoroughly
unsatisfactory~(\citealp{hms-corr23a}; \citealp{hms-corr23d};
\citealp{hms-ijar24}; \citealp{msh-cacm24}).
Indeed, it is now known that even for very simple classifiers, the
computed SHAP scores can bear little relationship with self-evident
notions of feature importance. An immediate consequence is that
computed SHAP scores can be misleading, i.e.\ they can induce human
decision-makers in error, by assigning more importance to the wrong
features.
Motivated by these negative results, researchers have proposed alternative
solutions for assessing relative feature
importance~(\citealp{ignatiev-corr23a}; \citealp{izza-corr23};
\citealp{ignatiev-corr23b}; \citealp{izza-aaai24}), all of
which are based on logic-based definitions of explanations.
Furthermore, axiomatic aggregations~\citep{izza-aaai24}
also reveal that feature importance can be naturally related with
power indices, i.e.\ measures of voter importance in a priori voting
power~(see~\citep{machover-hscv15} for an overview), but also with
payoff vectors, i.e.\ measures of player rewards in characteristic
function games~(see~\citep{elkind-bk12}).
%
%
%

It should be underscored that there is a crucial difference between
the focus of power indices for measuring a priori voting
power~\citep{machover-hscv15} and the focus of SHAP
scores~\citep{lundberg-nips17}.
In power indices, voter importance measures how likely each voter is
to be \emph{critical} for a set of voters to represent a winning
coalition (e.g.\ see~\citep{shapley-apsr54}).
In contrast, the existing definitions of SHAP scores \emph{overlook}
the criticality of features, and instead measure feature contributions
starting from the expected values of the ML model.
As discussed in this paper, this fundamental difference is at the core
of why SHAP scores can yield unsatisfactory results.


Furthermore, and although the connection between feature attribution
and power indices proposed in recent work~\citep{izza-aaai24} is
intuitive, it is unclear which power indices are best suited for
feature attribution, or even which 
properties such indices 
should exhibit.  Our paper addresses these issues.  
%
In particular, our work extends recent work on axiomatic
aggregations~\citep{izza-aaai24} and their relationship with formal
explanations in several
ways.
First, we uncover the tight connection between feature attribution in
XAI and power indices in {\em a priori} voting power.
Second, we argue that feature attribution in XAI requires a more
general formalization than power indices, and develop such a
formalization, namely \piterms (\piacronympl).
Third, we overview several properties of interest for \piacronympl,
and propose novel properties.
Fourth, and motivated by the results in this paper, we propose novel
\piacronympl, which can also find application in other domains.
Fifth, and finally, we provide a detailed characterization of which 
\piacronympl exhibit which properties.

\jnoteF{Contribs: 
  \begin{enumerate}
  \item Relating feature attribution with a priori voting power, or
    characteristic function games;
  \item Propose template scores;
  \item Introduce FISs;
  \item Properties of FISs;
  \item New FISs, not based on power indices;
  \item Detailed characterization.
  \end{enumerate}
}

\jnoteF{%
  The advances in ML \& the need for trust. \\
  The importance of XAI and existing myths. \\
  The advances of logic-based explainability.\\
  The existing limitations and the insight of distance-based XPs.\\
  The paper's contributions.\\[10pt]
  The paper reveals novel 
  connections between automated reasoning and its uses in logic-based
  abduction, game theory and its application in a priori voting power,
  and logic-based explainability in machine learning. A related by
  somewhat orthogonal link are measures of inconsistency for
  inconsistent theories.
}

\section{Preliminaries}
\label{sec:prelim}

\jnoteF{%
  Classifiers.\\
  Logic-Based Explanations.\\
  SHAP Scores.\\
  Power Indices.\\
}

\paragraph{Classification problems.}
%
Let $\fml{F}=\{1,\ldots,m\}$ denote a set of features and
$\fml{K}=\{c_1,c_2,\ldots,c_K\}$ denote a set of classes.
Each feature $i\in\fml{F}$ takes values from a domain $\mbb{D}_i$ which is either categorical or ordinal. If ordinal, a domains can be
discrete or real-valued.
Classes can also be categorical or ordinal.
Throughout the paper, we assume that domains are discrete-valued and that,
unless otherwise stated, classes are  ordinal.

A feature space is defined by
$\mbb{F}=\mbb{D}_1\times\mbb{D}_2\times\ldots\times\mbb{D}_m$. 
The notation $\mbf{x}=(x_1,\ldots,x_m)$ denotes an arbitrary point in 
feature space, where each $x_i$ is a variable taking values from
$\mbb{D}_i$. Moreover, the notation $\mbf{v}=(v_1,\ldots,v_m)$
represents a specific point in feature space, where each $v_i,
i=1,\ldots,m$ is a constant representing one concrete value from
$\mbb{D}_i$.
An \emph{instance} denotes a pair $(\mbf{v}, c)$, where
$\mbf{v}\in\mbb{F}$ and $c\in\fml{K}$, and such that
$c=\kappa(\mbf{v})$.
An ML classifier $\fml{M}$ is characterized by a
non-constant \emph{classification function} $\kappa$ that maps feature
space $\mbb{F}$ into the set of classes $\fml{K}$,
i.e.\ $\kappa:\mbb{F}\to\fml{K}$.
Given the above, we associate with a classifier $\fml{M}$, a tuple 
$(\fml{F},\mbb{F},\fml{K},\kappa)$.%
\footnote{The paper considers only classification problems. However,
under a suitable formalization, the paper's results are also valid for
regression problems.}

\paragraph{Selection of sets of points.}
Given $\mbf{x}$ and $\mbf{v}$, and a set $\fml{S}\subseteq\fml{F}$, we
define the predicate $\mbf{x}_{\fml{S}}=\mbf{v}_{\fml{S}}$ to
represent the logic statement $(\land_{i\in\fml{S}}x_i=v_i)$.
Furthermore $\Upsilon:2^{\fml{S}}\to2^{\mbb{F}}$, is defined by
$\Upsilon(\fml{S};\mbf{v})=\{\mbf{x}\in\mbb{F}\,|\,\mbf{x}_{\fml{S}}=\mbf{v}_{\fml{S}}\}$. Thus,
$\mbf{x}_{\fml{S}}=\mbf{v}_{\fml{S}}$ iff
$\mbf{x}\in\Upsilon(\fml{S};\mbf{v})$.

Similarly, given an instance $(\mbf{v},c)$, and for $\mbf{x}\in\mbb{F}$,
$\fml{I}(\mbf{x};\mbf{v}) = \{i\in\fml{F}\,|\,x_i=v_i\}$ denotes the
features for which both $\mbf{x}$ and $\mbf{v}$ take the same values.

%

\paragraph{Expected value.}
The \emph{expected value} of a classification function $\kappa$ is
denoted by $\mbf{E}[\kappa]$.
%
Furthermore, 
$\exv[\kappa\,|\,\mbf{x}_{\fml{S}}=\mbf{v}_{\fml{S}}]$ represents the
expected value of $\kappa$ over points in feature space consistent with the
coordinates of $\mbf{v}$ dictated by $\fml{S}$, which is defined as
follows:
%
%
\begin{equation}
  \exv[\kappa\,|\,\mbf{x}_{\fml{S}}=\mbf{v}_{\fml{S}}]
  :=\sfrac{1}{|\Upsilon(\fml{S};\mbf{v})|}
  \sum\nolimits_{\mbf{x}\in\Upsilon(\fml{S};\mbf{v})}\kappa(\mbf{x})
\end{equation}

\jnoteF{
  Given $\mbf{z}\in\mbb{F}$ and$\fml{S}\subseteq\fml{F}$, let
  $\mbf{z}_{\fml{S}}$ represent the vector composed of the coordinates
  of $\mbf{z}$ dictated by $\fml{S}$.
  \[
  \exv{\kappa\,|\,\mbf{x}_{\fml{S}}=\mbf{v}_{\fml{S}}}
  :=\frac{1}{|\Upsilon(\fml{S};\mbf{v})|}
  \sum\nolimits_{\mbf{x}\in\Upsilon(\fml{S};\mbf{v})}\kappa(\mbf{x})
  \]
}

\paragraph{Explanation problems.}
Given a classification problem $\fml{M}$ and a concrete instance
$(\mbf{v},c)$, an \emph{explanation problem} $\fml{E}$ is a tuple
$(\fml{M},(\mbf{v},c))$. When describing concepts in explainability,
we assume an underlying explanation problem $\fml{E}$, with
all definitions parameterized on $\fml{E}$.

\paragraph{Shapley values.}
%
Shapley values were proposed in the context of game theory 
by L.\ S.\ Shapley~\citep{shapley-ctg53}. Shapley values
are defined relative to a set $\fml{R}$, and a \emph{characteristic
function}, i.e.\ a real-valued function defined on the subsets of
$\fml{R}$, $\cf:2^{\fml{R}}\to\mbb{R}$, 
such that $\cf(\emptyset)=0$.%
\footnote{%
The original formulation also required super-additivity of the
characteristic function, but that condition has been relaxed in more
recent works~(\citealp{dubey-ijgt75}; \citealp{young-ijgt85}).}
It is well-known that Shapley values represent the \emph{unique}
function that, given $\fml{R}$ and $\cf$, respects a number of
important axioms (or properties). More detail about Shapley values is
available in standard
references~(\citealp{shapley-ctg53}; \citealp{dubey-ijgt75};
\citealp{young-ijgt85}; \citealp{roth-bk88}; \citealp{elkind-bk12}).

\paragraph{SHAP scores.}
%
In the context of explainability, Shapley values are most often
referred to as SHAP scores%
~(\citealp{kononenko-jmlr10}; \citealp{kononenko-kis14};
\citealp{lundberg-nips17}; \citealp{barcelo-aaai21};
\citealp{barcelo-jmlr23}),
and consider a specific characteristic function
$\cfn{e}:2^{\fml{F}}\to\mbb{R}$,
which is defined by,
\begin{equation} \label{eq:cfs}
  \cfn{e}(\fml{S};\fml{E}) ~~ := ~~
  \exv[\kappa\,|\,\mbf{x}_{\fml{S}}=\mbf{v}_{\fml{S}}]
\end{equation}
%
%
%
Thus, given a set $\fml{S}\subseteq\fml{F}$ of features,
$\cfn{e}(\fml{S};\fml{E})$ represents the expected value
of the classifier over the points of feature space represented by
$\Upsilon(\fml{S};\mbf{v})$.
%
The formulation presented in earlier
work~(\citealp{barcelo-aaai21}; \citealp{barcelo-jmlr23}) allows for
different input distributions when computing the average values. For
the purposes of this paper, it suffices to consider solely a uniform
input distribution, and so the dependency on the input distribution is
not accounted for.
Independently of the distribution considered, it should be clear that
in most cases $\cfn{e}(\emptyset)\not=0$; this is the case for example
with boolean classifiers~(\citealp{barcelo-aaai21};
\citealp{barcelo-jmlr23}).

To simplify notation, we use the following definitions:
\begin{align}
  \Delta_i(\fml{S}; \fml{E}) & :=
  \left(\cfn{e}(\fml{S};\fml{E})-\cfn{e}(\fml{S}\setminus\{i\};\fml{E})\right)
  \label{eq:def:delta}
  \\[2pt] 
  \varsigma(|\fml{S}|) & :=
  \sfrac{1}{\left(|\fml{F}|\times\binom{|\fml{F}|-1}{|\fml{S}|-1}\right)}
  \label{eq:def:vsigma}
\end{align}

Finally, let $\svn{E}:\fml{F}\to\mbb{R}$, i.e.\ the SHAP score for
feature $i$, be defined by,
%
\begin{align} \label{eq:sv}
  \svn{E}(i;\fml{E}) := 
  \sum\nolimits_{\fml{S}\in\{\fml{T}\subseteq\fml{F}\,|\,i\in\fml{T}\}}\varsigma(|\fml{S}|)\times\Delta_i(\fml{S};\fml{E})
\end{align}
Given an instance $(\mbf{v},c)$, the SHAP score assigned to each
feature measures the \emph{contribution} of that feature with respect
to the prediction. 
From earlier work, it is understood that a positive/negative value
indicates that the feature can contribute to changing the prediction,
whereas a value of 0 indicates no
contribution~\citep{kononenko-jmlr10}.
%

\paragraph{Abductive and contrastive explanations.}
%
Given an explanation problem $\fml{E}$, a \emph{weak abductive
explanation} (WAXp) is a set of features $\fml{S}$ such that
the prediction remains unchanged
when the features in $\fml{S}$ are assigned the values dictated by
$\mbf{v}$:
%
\begin{equation} \label{eq:waxp}
  \forall(\mbf{x}\in\mbb{F}).\left(\land_{i\in\fml{S}}x_i=v_i\right)\limply\left(\kappa(\mbf{x})=\kappa(\mbf{v})\right)
\end{equation}
%
%
(The fixed features in $\fml{S}$ are said to be \emph{sufficient} for
the given prediction.)
Moreover, an \emph{abductive explanation} (AXp) is a subset-minimal
WAXp.
Similarly to the case of AXps, a \emph{weak contrastive explanation}
(WCXp) is a set of features $\fml{S}$ such that the prediction can
change
when the features not in $\fml{S}$ are assigned the values dictated by
$\mbf{v}$: 
%
\begin{equation} \label{eq:wcxp}
  \exists(\mbf{x}\in\mbb{F}).\left(\land_{i\in\fml{F}\setminus\fml{S}}x_i=v_i\right)\land\left(\kappa(\mbf{x})\not=\kappa(\mbf{v})\right)
\end{equation}
%
A \emph{contrastive explanation} (CXp) is a subset-minimal WCXp.
(Weak) (abductive/contrastive) explanations are associated with 
predicates $\waxp,\axp,\wcxp,\cxp:2^{\fml{F}}\to\{0,1\}$ that hold
true when the respective condition is satisfied.
Progress in formal explainability is summarized in recent
overviews~(\citealp{msi-aaai22}; \citealp{ms-rw22};
\citealp{darwiche-lics23}).

A set of features is an AXp iff it is a minimal-hitting set (MHSes) of
the set of CXps~\citep{inams-aiia20}. Similarly, a set of features is
a CXp iff it is an MHS of the set of AXps. This result is referred to as
MHS-duality, and builds on Reiter's seminal work on model-based
diagnosis~\citep{reiter-aij87}.

%

\paragraph{Feature (ir)relevancy.}
%
The set of features that are included in at least one (abductive) 
explanation are defined as follows:
\begin{equation}
  \mathfrak{F}(\fml{E}):=\{i\in\fml{X}\,|\,\fml{X}\in2^{\fml{F}}\land\axp(\fml{X})\}
\end{equation}
%
(It is known that $\mathfrak{F}(\fml{E})$ remains unchanged if CXps
are used instead of AXps~\citep{inams-aiia20}.)
%
%
%
%
%
Finally, a feature $i\in\fml{F}$ is \emph{irrelevant}, i.e.\ predicate
$\irrelevant(i)$ holds true, if $i\not\in\mathfrak{F}(\fml{E})$;
otherwise feature $i$ is \emph{relevant}, and predicate $\relevant(i)$
holds true. 
Clearly, given some explanation problem $\fml{E}$,
$\forall(i\in\fml{F}).\irrelevant(i)\leftrightarrow\neg\relevant(i)\,$.


%
%

\paragraph{A priori voting power.}
A priori voting power~(\citealp{machover-psr04};
\citealp{machover-hscv15}) measures how each voter,
taken from some assembly of voters, has the potential to impact the
voting outcomes of coalitions made of voters from that assembly.
A \emph{weighted voting game} (WVG) is defined for an assembly of
voters $\fml{F}=\{1,\ldots,m\}$, where each voter $i\in\fml{F}$ has
some voting power $w_i$ (which we assume to be a non-negative
integer). Furthermore, the game is characterized by a quota
$Q\le\sum_{i\in\fml{F}}w_i$. A subset of the voters
$\fml{S}\subseteq\fml{F}$ is a coalition voting the same way,
i.e.\ either in favor or against. A coalition is a \emph{winning
coalition} if $\sum_{i\in\fml{S}}w_i\ge{Q}$; otherwise it is a losing
coalition.
A WVG $\fml{W}$ is represented by
$[Q;w_1,\ldots,w_m]$~\citep{machover-hscv15}.
Given a WVG, a \emph{power index} (or just \emph{index}),
$\pidx:\fml{F}\to\mbb{R}$, maps each voter to a real number, measuring
the voter's capacity to influence voting decisions 
relative to other voters.
In the more general setting of characteristic function
games~\citep{elkind-bk12}, power indices are referred to as
\emph{payoff vectors}.

%

\section{Related Work}

\paragraph{Power indices.}
%
%
Work on a priori voting power traces back at least to the mid
1940s~\citep{penrose-jrss46}.
Over the years, different power indices have been proposed as
mechanisms for assigning relative importance to voting members of an
assembly.
While comprehensive lists of power indices
exist~(\citealp{andjiga-msh03}; \citealp{machover-psr04};
\citealp{machover-hscv15}), in this 
paper we only consider some of the best-known power indices:
Shapley-Shubik~\citep{shapley-apsr54},
Banzhaf~\citep{banzhaf-rlr65},
Deegan-Packel~\citep{deegan-ijgt78},
Johnston~\citep{johnston-ep78},
Holler-Packel~\citep{holler-je83},
Andjiga~\citep{andjiga-msh03}, and the
Responsibility index~\citep{izza-aaai24} (which can be related with
earlier work~\citep{chockler-acm-tocl08}).
For more details on power indices the reader is invited to look at the more general
setting of cooperative game theory~\citep{elkind-bk12}.

\jnoteF{Add table with succinct definitions.}

\paragraph{Axiomatic aggregations.}
%
%
Motivated by a growing number of examples where SHAP scores yield
misleading relative feature importance, prior work has proposed
ways of computing measures of relative feature importance that address
the limitations of existing SHAP scores, namely \emph{formal feature
attribution}~(\citealp{ignatiev-corr23a}; \citealp{ignatiev-corr23b}),
and selected \emph{axiomatic aggregation schemes}~(\citealp{izza-corr23};
\citealp{izza-aaai24}), all of which are based on abductive
explanations. 
Even more importantly, recent work has demonstrated that formal feature
attribution corresponds to some proposed axiomatic
aggregation schemes~\citep{izza-aaai24}, and that these correspond to a few
well-known power indices~(\citealp{deegan-ijgt78};
\citealp{holler-je83}; \citealp{chockler-acm-tocl08}).

\paragraph{Corrected SHAP scores.}
%
Other recent work observes that the problems with existing
SHAP scores result from the characteristic function
used~\citep{lhms-corr24a}, and proposes alternative characteristic
functions, which eliminate the known limitations of SHAP scores.

\jnoteF{Brief overview of our own work.}


\section{Formalizations of \piacronympl}
\label{sec:formpi}

This section formalizes the best-known power indices, but targets 
the context of explainability by feature attribution. The proposed
formalizations will be referred to as \emph{\pitermscap}
(\piacronympl).

\paragraph{Why power indices for feature attribution?}
Most of the existing work on using Shapley values in explainability
relates with the original work of L.~Shapley for assigning values to
the players of an $n$-player cooperative game. Furthermore, and as
noted earlier in~\cref{sec:prelim}, the characteristic function used
in the case of explainability has been the expected value of the ML
model~(\citealp{lundberg-nips17}; \citealp{vandenbroeck-jair22};
\citealp{barcelo-jmlr23}).%
\footnote{%
For classification problems with unordered classes, 
the expected value would be ill-defined.}
Recent work demonstrated that such characteristic function can produce
unsatisfactory relative orders of feature importance.%
\footnote{It should be unsurprising that not all characteristic
functions may be suitable for the purposes of explainability by
feature attribution using Shapley values. The question is then which
characteristic function(s) might be the most adequate.}
%
%
In fact, it is apparent that feature importance is far more related
with a priori voting power than with arbitrary $n$-player games,
which impose no restrictions on the characteristic function used.
This relationship is exploited for example in recent
work~\citep{izza-aaai24}.
Moreover, early work on voting power was clear about how to measure the
importance of each voter in voting games. Indeed, and quoting
from~\citep{shapley-apsr54}:
``\emph{Our definition of the power of an individual member depends on
the chance he has of being critical to the success of a winning
coalition}''. All of the power indices briefly summarized above
explicitly measure the importance 
of an individual member (in our case a feature) at being
\emph{critical}.
The same insight can be used in explainability by feature attribution.
We can view feature attribution as a voting game, as follows. A set of
fixed features represents a coalition. A set of fixed features is a
winning coalition if those features are sufficient for the given
prediction. Furthermore, and adapting the quote 
above~\citep{shapley-apsr54}: \emph{the importance of a feature  
depends on the chance it has of being critical for a set of fixed
features to be sufficient for a given prediction.}
Moreover, it is plain to conclude that power indices and their use in
explainability will yield fundamentally different measures of feature
importance than what earlier work on SHAP scores
did~\citep{lundberg-nips17}.
Finally, one could argue that capturing feature attribution by
mimicking power indices should not be warranted. However, as the most
widely used version of SHAP scores confirms, by choosing a problematic 
characteristic function, one may obtain unsatisfactory results.

\paragraph{Why FISs?}%
In cooperative game theory, each characteristic function defines a
concrete game. In this case, the Shapley values assign relative
importance to each player, for the given game~\citep{shapley-ctg53}. 
In the case of a priori voting power, the (usually implicit)
characteristic function captures the notion of a voter being critical
for a winning coalition; this is often reflected in terms of the
winning coalitions that are accounted for. 
In the case of feature attribution for XAI, the situation is somewhat
more complex. We adopt the approach used for power indices, namely we
will look at the cases when a feature is critical for a winning
coalition (i.e.\ fixed features sufficient for prediction), and such
that such coalition may or may not be subset-minimal.
In addition, we will adapt the best-known power indices. However, we
associate a parameterizable characteristic function with each FIS,
which may or may not be left implicit. Then, each explanation problem
is used to set the characteristic function to consider, and as a
result produce the FIS associated with the different features.

In contrast with other work~\citep{izza-aaai24}, \piacronympl will be
parameterized given an explanation problem and a chosen characteristic
function. Afterwards, we detail the assumptions that must be made to
obtain the indices proposed elsewhere~(\citealp{ignatiev-corr23a};
\citealp{izza-corr23}; \citealp{ignatiev-corr23b};
\citealp{izza-aaai24}).

\subsection{Formalizing Template Scores}

We will use the following concepts. First,
$\cf:2^{\fml{F}}\to\mbb{R}$ is a characteristic function. Furthermore,
we parameterize $\cf$ using the target explanation problem $\fml{E}$,
i.e.\ when defining the characteristic function, we will use the
notation $\cf(\fml{S};\fml{E})$, indicating that we are computing the
characteristic function for some set $\fml{S}$, and that the
characteristic function is parameterized on $\fml{E}$. %
%
%
Given $\fml{S}\subseteq\fml{F}$ and $i\in\fml{S}$, we define,%
\footnote{%
With respect to~\eqref{eq:def:delta}, $\Delta_i$ is being redefined to
allow for a parameterization on the characteristic function $\cf$.}
\begin{equation} \label{eq:def:delta2}
  \Delta_i(\fml{S};\fml{E},\cf)=\cf(\fml{S};\fml{E})-\cf(\fml{S}\setminus\{i\};\fml{E})
\end{equation}
$\Delta_i(\fml{S};\fml{E},\cf)$ denotes the influence of $\fml{S}$ due to
feature $i$.
Moreover,
$\Delta(\fml{S};\fml{E},\cf)=\sum_{i\in\fml{S}}\left(\Delta_i(\fml{S};\fml{E},\cf)\right)$
denotes the \emph{relative influence} of $\fml{S}$.
(Although the definition of $\Delta_i$ may seem different from recent
works on
explainability~(\citealp{lundberg-nips17};
\citealp{vandenbroeck-jair22}; \citealp{barcelo-jmlr23})
it follows the original work in game theory~\citep{shapley-ctg53} and a
priori voting power~\citep{shapley-apsr54}. More importantly, it is
plain to conclude that either can be used.)

Furthermore, the following additional sets are defined:
\begin{align}
  \mbb{WA}_i(\fml{E})&=\{\fml{S}\subseteq\fml{F}\,|\,\waxp(\fml{S};\fml{E})\land{i\in\fml{S}}\}\nonumber\\
  \mbb{WC}_i(\fml{E})&=\{\fml{S}\subseteq\fml{F}\,|\,\wcxp(\fml{S};\fml{E})\land{i\in\fml{S}}\}\nonumber\\
  \mbb{A}_i(\fml{E})&=\{\fml{S}\subseteq\fml{F}\,|\,\axp(\fml{S};\fml{E})\land{i\in\fml{S}}\}\nonumber\\
  \mbb{C}_i(\fml{E})&=\{\fml{S}\subseteq\fml{F}\,|\,\cxp(\fml{S};\fml{E})\land{i\in\fml{S}}\}\nonumber
\end{align}
The definitions of $\mbb{WA}$, $\mbb{WC}$, $\mbb{A}$, and $\mbb{C}$
mimic the ones above, but without specifying any feature.


\paragraph{Defining template scores.} 
%
\piacronympl will be represented with the notation $\sv$, standing for
\emph{score}.
An \piacronym is a function mapping features to some real
value, $\sv:\fml{F}\to\mbb{R}$.
Moreover, each \piacronympl $\sv$ will be the result of instantiating
some \emph{template score} $\tsv$, by specifying a concrete
characteristic function. Furthermore, each template score will
generalize a well-known power index.%
\footnote{%
In cases where the characteristic function is not parameterized on
some explanation problem, then a template score denotes an index. The
term template score serves to highlight that such indices can be
parameterized by some explanation problem.
Moreover, as shown in~\cref{sec:nidx}, our framework enables the
definition of FISs without starting from a template score. However, we
first show how we obtain FISs from template scores.}
%
%
Moreover, and besides resulting from a specific instantiation of a
template score, each \piacronym is parameterized on the target
explanation problem $\fml{E}$.
%
%
Several generalized template scores (from which we will later derive
\piacronympl) are presented next.%
\footnote{%
This paper introduces generalized formalizations of the best-known
power indices, starting from existing definitions
of power indices~(e.g.~\citep{andjiga-msh03}). Later, the paper shows
how suitable characteristic functions can be used to exactly capture
the original power indices. 
%
}
%

\paragraph{Template score using Shapley-Shubik's index.}
Following~\citep{shapley-apsr54}, the template Shapley-Shubik score is
defined as follows:
\begin{equation} \label{eq:tsss}
\tsvn{S}(i; \fml{E}, \cf) :=
\sum_{\fml{S}\in\{\fml{T}\subseteq\fml{F}\,|\,i\in\fml{T}\}}
\left(\frac{\Delta_i(\fml{S};\fml{E},\cf)}{|\fml{F}|\times\binom{|\fml{F}|-1}{|\fml{S}|-1}}\right)
\end{equation}

\paragraph{Template score using Banzhaf's index.}
Following~\citep{banzhaf-rlr65}, the template Banzhaf score is defined
as follows:
\begin{equation} \label{eq:tsb}
\tsvn{B}(i;\fml{E},\cf) := \sum_{\fml{S}\in\{\fml{T}\subseteq\fml{F}\,|\,i\in\fml{T}\}}
\left(\frac{\Delta_i(\fml{S};\fml{E},\cf)}{2^{|\fml{F}|-1}}\right)
\end{equation}

\paragraph{Template score using Johnston's index.}
Following~\citep{johnston-ep78}, the template Johnston score is
defined as follows:
\begin{equation} \label{eq:tsj}
  \tsvn{J}(i;\fml{E},\cf) := \sum_{\substack{\fml{S}\in\{\fml{T}\subseteq\fml{F}\,|\,i\in\fml{T}\}\\\Delta(\fml{S};\fml{E},\cf)\not=0}}\left(\frac{\Delta_i(\fml{S};\fml{E},\cf)}{\Delta(\fml{S};\fml{E},\cf)}\right)
\end{equation}

\paragraph{Template score using Deeghan-Packel index.}
Following~\citep{deegan-ijgt78}, the template Deegan-Packel score is
defined as follows:
\begin{equation} \label{eq:tsdp}
\tsvn{D}(i;\fml{E},\cf) := \sum_{\fml{S}\in\mbb{A}_i(\fml{E})}
\left(\frac{\Delta_i(\fml{S};\fml{E},\cf)}{|\fml{S}|\times|\mbb{A}(\fml{E})|}\right)
\end{equation}

\paragraph{Template score using Holler-Packel index.}
Following~\citep{holler-je83}, the template Holler-Packel score is
defined as follows:
\begin{equation} \label{eq:tshp}
\tsvn{H}(i;\fml{E},\cf) := \sum_{\fml{S}\in\mbb{A}_i(\fml{E})}
\left(\frac{\Delta_i(\fml{S};\fml{E},\cf)}{|\mbb{A}(\fml{E})|}\right)
\end{equation}

\paragraph{Template score using the Responsibility index.}
Following~\citep{izza-aaai24}, the template Responsibility
score is defined as follows:
\begin{equation} \label{eq:tsr}
\tsvn{R}(i;\fml{E},\cf) :=
\max\left\{\left.\frac{\Delta_i(\fml{S};\fml{E},\cf)}{|\fml{S}|}\,\right|\,\fml{S}\in\mbb{A}_i(\fml{E})\right\}
\end{equation}

The Deegan-Packel, Holler-Packel and Responsibility scores have been
recently applied in the context of explainability~\citep{izza-aaai24}.
However, as shown below, our definitions generalize the ones
in~\citep{izza-aaai24}. Furthermore, the relationship between the
Responsibility score and the Chockler
index~\citep{chockler-acm-tocl08} will be analyzed later
in~\cref{sec:nidx}.

\paragraph{Template score using Andjiga's index.}
Following~\citep{andjiga-msh03}, the template Andjiga score is defined as
follows:
\begin{equation} \label{eq:tsa}
\tsvn{A}(i;\fml{E},\cf) := \sum_{\fml{S}\in\mbb{WA}_i}
\left(\frac{\Delta_i(\fml{S};\fml{E},\cf)}{|\fml{S}|\times|\mbb{WA}(\fml{E})|}\right)
\end{equation}

\jnoteF{In each case, summarize existing text-based definitions.}

\paragraph{Other power indices.}
%
The importance of power indices for a priori voting power (and also
characteristic function games) has motivated the proposal of
additional proposals over the years~(\citealp{penrose-jrss46};
\citealp{coleman-sc71}; \citealp{curiel-mss87};
\citealp{colomer-ejpr96}; \citealp{berg-gdn99};
\citealp{freixas-cgtm10}; \citealp{freixas-ejor14};
\citealp{felsenthal-ho16}). Our assessment is that these do not offer
additional insights for explainability by feature attribution.
Nevertheless, it would be simple to devise generalized template scores
using those additional power indices.

\paragraph{From template to instantiated scores.}
%
Given one of the template scores $\tsv$ detailed above, we can
instantiate a concrete score by specifying the characteristic function
to use. (Clearly, we could consider arbitrary many characteristic
functions.) Finally, we obtain an FIS by parameterizing the concrete
score given a target explanation problem.

Suppose we have a template score $\tsvn{T}$ parameterized on some
explanation problem $\fml{E}$ and (unspecified) characteristic
function $\cfn{\tau}$. 
%
Then, the instantiated score $\svn{I}$ will be defined as follows: 
\[
\svn{I}(i;\fml{E}) := \tsvn{T}(i; \fml{E},\cfn{\tau})
\]

\subsection{Existing Scores as \piacronympl}

The most often studied feature importance score, also referred to as
the (original) SHAP score is given by~\eqref{eq:sv}, and represented
by $\svn{E}$.
Clearly, $\svn{E}$ can be viewed as an instantiation of the
template score based on Shapley-Shubik index, $\tsvn{S}$, using
$\cfn{e}$ (see~\eqref{eq:cfs}):
%
$\svn{E}(i;\fml{E}) := \tsvn{S}(i; \fml{E}, \cfn{e})$.
%
%

Recent work proposed a workaround to address some of the limitations
of existing SHAP scores, by replacing the classification function with an
alternative similarity (or matching) 
function~\citep{lhms-corr24a}.
\[
\zeta(\mbf{x};\fml{E}) = \left\{
\begin{array}{lcl}
  1 & \quad & \tn{if $(\kappa(\mbf{x})=\kappa(\mbf{v}))$} \\[2pt]
  0 & \quad & \tn{otherwise} \\
\end{array}
\right.
\]

Given the similarity function, we introduce a new characteristic
function,
\begin{equation}
  \cfn{m}(\fml{S};\fml{E}) ~~ := ~~
  \exv[\zeta(\mbf{x})\,|\,\mbf{x}_{\fml{S}}=\mbf{v}_{\fml{S}}]
  \label{eq:def:vs}
\end{equation}



And, finally, the FIS using the similarity (or matching) function
becomes:
\[
\svn{M}(i;\fml{E}) := \tsvn{S}(i;\fml{E},\cfn{m})
\]


\subsection{Axiomatic Aggregations as \piacronympl} \label{ssec:axioaggr}

Recent work~\citep{izza-aaai24} showed that two variants of formal
feature attribution~(\citealp{ignatiev-corr23a};
\citealp{ignatiev-corr23b}) can be viewed as instantiations of the
Deegan-Packel and Holler-Packel scores. In addition, a third power
index was also studied, namely the so-called Responsibility score.
Although this earlier work might seem to involve simpler
formalizations than the ones used in~\eqref{eq:tsdp},
~\eqref{eq:tshp}, ~\eqref{eq:tsr},
because no
characteristic function was explicitly considered, we now show that it
suffices to pick the right characteristic function to obtain the same
scores.%
\footnote{
Earlier work~\citep{izza-aaai24} refers to indices and not to scores.
We will opt to use the term scores, and more specifically FISs, in the
case of feature attribution in XAI.}

\paragraph{Choosing a characteristic function.}
%
The recently proposed axiomatic aggregations based on AXps are special
cases of the power indices introduced above, in the case of the
Deegan-Packel, Holler-Packel and Responsibility indices, when a
concrete characteristic function is used. We show below that this is
indeed the case.

Let us define the following characteristic function:
$\cfn{a}(\fml{S};\fml{E}) := \tn{ITE}(\axp(\fml{S};\fml{E}),1,0)$.
%
%
As a result, the Deegan-Packel, Holler-Packel and Responsibility
indices can be reformulated as FISs, as follows:
\begin{align}
  \svn{D}(i;\fml{E}) & :=
  \tsvn{D}(i;\fml{E},\cfn{a})
  =\sum\nolimits_{\fml{S}\in\mbb{A}_i(\fml{E})}
  \left(\sfrac{1}{(|\fml{S}|\times|\mbb{A}(\fml{E})|)}\right)
  \nonumber\\[1.5pt]
  \svn{H}(i;\fml{E}) & :=
  \tsvn{H}(i;\fml{E},\cfn{a}) = 
  \sum\nolimits_{\fml{S}\in\mbb{A}_i(\fml{E})}
  \left(\sfrac{1}{|\mbb{A}(\fml{E})|}\right)
  \nonumber\\[1.5pt]
  \svn{R}(i;\fml{E}) & :=
  \tsvn{R}(i;\fml{E},\cfn{a}) =
  \max\left\{\left.\sfrac{1}{|\fml{S}|}\,\right|\,\fml{S}\in\mbb{A}_i(\fml{E})\right\}
  \nonumber
\end{align}
These modified definitions match exactly those used in recent
work~\citep{izza-aaai24}.

\paragraph{Examples of unsatisfactory attributions.}
%
For some of the scores studied in recent work~\citep{izza-aaai24},
there are simple examples that illustrate that the computed scores can
be unsatisfactory in terms of relative order of feature importance. We
analyze two examples. 

\begin{example}[Holler-Packel FIS, $\svn{H}$]
  Let the set of AXps be
  $\mbb{A}=\{\{1\},\{2,3,4,5,6\},\{2,3,4,5,7\}\}$.
  It is the case that
  $\svn{H}(1)=\svn{H}(6)=\svn{H}(7)<\svn{H}(2)=\svn{H}(3)=\svn{H}(4)=\svn{H}(5)$,
  but it is apparent that feature 1 should be deemed more important
  than any of the other features.
\end{example}

\begin{example}[Responsibility FIS, $\svn{R}$]
  Let the set of AXps be,
  \[
  \begin{array}{lcl}
    \mbb{A} & = & \{\{1,8\},\{2,3,4\},\{2,3,5\},\{2,3,6\},\\
    & & ~~\{2,3,7\},\{2,4,5\},\{2,4,6\},\{2,4,7\},\\
    & & ~~\{2,5,6\},\{2,5,7\},\{2,5,6\}\}
  \end{array}
  \]
  It is the case that $\svn{R}(1)=\svn{R}(8)>\svn{R}(2)$. However, it 
  is apparent that feature 2 should be deemed more important than
  feature 1; feature 2 occurs in ten size 3 AXps, and features 1 and 8
  only occur in one size 2 AXp.
\end{example}

For the Deegan-Packel FIS, it is more difficult to find simple
explanation problems for which clearly unsatisfactory results are
obtained.
%
Nevertheless, the Deegan-Packel \piacronym does not respect a few
relevant properties of \piacronympl, as shown in~\cref{sec:tres}.

\subsection{Well-Known Power Indices as \piacronympl}

We now show how FISs can be obtained from the Shapley-Shubik, Banzhaf,
Johnston, and Andjiga indices.

%

\paragraph{FISs based on abductive explanations.}
%
The simple characteristic function proposed in~\cref{ssec:axioaggr}
serves for capturing both formal feature
attribution~(\citealp{ignatiev-corr23a}; \citealp{ignatiev-corr23b})
and axiomatic aggregations~(\citealp{izza-corr23};
\citealp{izza-aaai24}), all of which are based on AXps. However, the
same characteristic function \emph{cannot} be used to capture either
the Shapley-Shubik or the Banzhaf indices (and this may indirectly
justify why these power indices have not been related with AXps
before). 

As a result, we introduce a somewhat different characteristic
function:
$\cfn{w}(\fml{S};\fml{E}) := \tn{ITE}(\waxp(\fml{S};\fml{E}),1,0)$.
%
(As shown next, $\cfn{w}$ enables capturing both the Shapley-Shubik
and the Banzhaf FISs, among others.) 

Given $\cfn{w}$, we will obtain $\Delta_i(\fml{S})=1$ only for sets
$\fml{S}\subseteq\fml{F}$ such that, 
$\waxp(\fml{S})\land\neg\waxp(\fml{S}\setminus\{i\})$. (Evidently, we
can restrict further those sets with additional constraints.)
Observe that this definition of the characteristic function captures
the cases where feature $i$ is \emph{critical} for making $\fml{S}$ a
\emph{winning coalition}, and so naturally relates with the intended
goals of Shapley-Shubik's index~\citep{shapley-apsr54}.
However, the modified definition does not equate with $\fml{S}$ being
an AXp. Indeed, feature $i$ may be critical, but other features may
not be critical, and so $\fml{S}$ might not be an AXp.

Given a set of features $\fml{S}\subseteq\fml{F}$ and a feature
$i\in\fml{F}$, with $i\in\fml{S}$, define the predicate denoting
whether a feature $i$ is critical for $\fml{S}$,
\begin{equation} \label{eq:def:critic}
  \critic(i,\fml{S};\fml{E}) :=
  \waxp(\fml{S};\fml{E})\land\neg\waxp(\fml{S}\setminus\{i\};\fml{E})
\end{equation}
As a result, we can instantiate the Shapley-Shubik, Banzhaf, Johnston
and Andjiga template scores as follows:
\begin{align}
  \svn{S}(i;\fml{E}) & := \tsvn{S}(i;\fml{E},\cfn{w}) = \nonumber \\
  &\sum\nolimits_{\fml{S}\subseteq\fml{F}\land\critic(i,\fml{S};\fml{E})}
  \left(\sfrac{1}{\left(|\fml{F}|\times\binom{|\fml{F}|-1}{|\fml{S}|-1}\right)}\right)
  \nonumber\\[1.5pt]
  \svn{B}(i;\fml{E}) & := \tsvn{B}(i;\fml{E},\cfn{w}) = \nonumber \\
  & \sum\nolimits_{\fml{S}\subseteq\fml{F}\land\critic(i,\fml{S};\fml{E})}
  \left(\sfrac{1}{2^{|\fml{F}|-1}}\right)
  \nonumber\\[1.5pt]
  \svn{J}(i;\fml{E}) & := \tsvn{J}(i;\fml{E},\cfn{w}) = \nonumber \\
  & \sum\nolimits_{\fml{S}\subseteq\fml{F}\land\critic(i,\fml{S};\fml{E})}
  \left(\sfrac{1}{\Delta(\fml{S})}\right)
  \nonumber\\[1.5pt]
  \svn{A}(i;\fml{E}) & := \tsvn{A}(i;\fml{E},\cfn{w}) = \nonumber \\
  & \sum\nolimits_{\fml{S}\subseteq\fml{F}\land\critic(i,\fml{S};\fml{E})}
  \left(\sfrac{1}{\left(|\fml{S}|\times|\mbb{WA}|\right)}\right)
  \nonumber
\end{align}
As can be concluded, by selecting a suitable characteristic function,
we are able to define \piterms that mimic some of the best-known
indices in a priori voting power.%
\footnote{It should be noted that we could also use $\cfn{w}$ to
obtain $\svn{D}$ and $\svn{H}$, by also requiring $\fml{S}$ to be
subset-minimal. However, we opted to mimic the definitions proposed in
earlier work~\citep{izza-aaai24}, where the concept of characteristic
function is not used.}
Furthermore, we note that $\svn{E}$, $\svn{M}$ and $\svn{S}$ are
defined using the same template score, which generalizes the
Shapley-Shubik index. These scores are distinguished by the
characteristic function used for instantiating each one, and so can be
referred to as the (expected-value) SHAP score (with
$\svn{E}(i;\fml{E})=\tsvn{S}(i;\fml{E},\cfn{e})$),
the similarity (or matching) SHAP score (with
$\svn{E}(i;\fml{E})=\tsvn{S}(i;\fml{E},\cfn{m})$),
and the formal SHAP score (with
$\svn{S}(i;\fml{E})=\tsvn{S}(i;\fml{E},\cfn{w})$).
Finally, some of the characteristic functions studied earlier in this
paper, e.g.\ $\cfn{e}$ or $\cfn{m}$, could also be used for defining
alternative scores starting from the template scores proposed at the
beginning of this section, e.g.\ $\svn{D}$ or $\svn{A}$, among others.

\jnoteF{%
  Cover: $\svn{E}$ (Expected value/Lundberg's), $\svn{M}$
  (Similarity/indistinguishability), $\svn{S}$ (Formal Shapley, or
  just Shapley), etc.
}

\section{Proposed Properties for \piacronympl}
\label{sec:nprop}

As observed earlier, \piacronympl raise a number of challenges not
faced either by cooperative games or by power indices. More
concretely, in the case of \piacronympl, the selected characteristic
function will depend on the target explanation problem and so also on
the ML model being explained.
As a result, the goal of faithfully capturing feature importance
requires identifying additional properties, in addition to those that
Shapley values in cooperative games are well-known for.%
\footnote{%
Different \piacronympl may yield different rankings of features,
similarly to what is observed with power
indices~(\citealp{freixas-ijgt10}; \citealp{moretti-ijcai22}).}
This explains in part why recent work~\citep{izza-aaai24} also
proposed additional properties in addition to those that characterize
Shapley values in cooperative game theory.

\subsection{Properties Targeting Cooperative Games}

The best-known properties for values assigned to players of
cooperative games are those proposed by L.~Shapley
\citep{shapley-ctg53}. We start by overviewing well-known properties
for indices (and so for template scores), which are parameterized on
the explanation problem and (every) characteristic function.
(We adopt the presentation from~\citep{elkind-bk12}.)

\paragraph{P01: Efficiency.} \label{par:p01}
%
A cooperative game is efficient if,
\[
\sum\nolimits_{i\in\fml{F}}\sv(i;\fml{E},\cf)=\cf(\fml{F})-\cf(\emptyset)
\]
In game theory, it is often assumed that $\cf(\emptyset)=0$
(e.g.~\citep{shapley-ctg53}). However, in the case of SHAP scores,
$\cf(\emptyset)\not=0$. As a result, claims of efficiency for SHAP
scores usually ignore the term $\cf(\emptyset)$.

\paragraph{P02: Symmetry.}
%
Two elements $i,j\in\fml{F}$ are symmetric if
$\cf(\fml{S}\cup\{i\})=\cf(\fml{S}\cup\{j\})$ for
$\fml{S}\subseteq\fml{F}\setminus\{i,j\}$.
An index respects the property of symmetry if, for any $\fml{E}$ and
for any symmetric elements $i,j\in\fml{F}$,
$\sv(i;\fml{E},\cf)=\sv(j;\fml{E},\cf)$.

\paragraph{P03: Additivity.}
%
Let $\cfn{1}$ and $\cfn{2}$ represent two characteristic functions.
Furthermore, for any $\fml{S}\subseteq\fml{F}$, let
$\cfn{1{+}2}(\fml{S})=\cfn{1}(\fml{S})+\cfn{2}(\fml{S})$.
An index respects the property of additivity if for $i\in\fml{F}$, 
$\svn{1{+}2}(i;\fml{E},\cfn{1{+}2})=\svn{1}(i;\fml{E},\cfn{1})+\svn{2}(i;\fml{E},\cfn{2})$.

\paragraph{P04: Dummy player.}
%
For any element $i\in\fml{F}$, if $\cf(\fml{S})=\cf(\fml{S}\cup\{i\})$
for any $\fml{S}\subseteq\fml{F}$, then $i$ is a \emph{dummy player}
(or element).
An index respects the dummy player property if for any dummy element
$i\in\fml{F}$, $\sv(i)=0$.

%

\subsection{Properties Devised for Axiomatic Aggregations}

Work on axiomatic aggregations~\citep{izza-aaai24} proposed a number
of properties that scores should respect. The properties described
below are of interest to the purposes of our work, and apply to
\piacronympl.


\paragraph{P05: Minimal monotonicity.}
%
Another property proposed for axiomatic
aggregations~\citep{izza-aaai24} requires that, given any classifier,
whenever the set of AXps for feature $i$ is included in the set of
AXps for feature $j$, then the score for $i$ should be no greater than
the score for $j$,
\[
\forall(i,j\in\fml{F}).(\mbb{A}_i\subseteq\mbb{A}_j)\limply\sv(i;\fml{E},\cf)\le\sv(j;\fml{E},\cf)
\]

\paragraph{P06: $\gamma$-efficiency.}
%
A final property used for axiomatic aggregations generalizes the
notion of efficiency.
Let $\gamma(;\fml{E})\in\mbb{R}$. A score is $\gamma$-efficient if,
\[
\sum\nolimits_{i\in\fml{F}}(\sv(i; \fml{E}, \kappa))=\gamma(;\fml{E})
\]
$\gamma$-efficiency can be viewed as a mechanism of relaxing the
property of efficiency (see P01 above), 
and has been the 
subject of past research~\citep{dubey-mor81}.

\paragraph{Other properties.}
We will not adopt other properties discussed in~\citep{izza-aaai24}.
Some of the remaining properties, e.g.\ \emph{null feature}, are
special cases of the properties discussed in the remainder of this
section.


\subsection{Novel Properties Targeting Scores}

As noted earlier in the paper, FISs can be viewed as a generalization
of power indices. Whereas the ML model can be viewed as representing
the weighted voting game, for each ML model there are many different
instances that need to be explained.

\paragraph{P07: Independence from class labeling.}
%
Given a classifier $\fml{M}=(\fml{F},\mbb{F},\fml{K},\kappa)$, a class
relabeling consists in defining a new classifier
$\fml{M}=(\fml{F},\mbb{F},\fml{K}',\kappa')$, where the elements of
$\fml{K}'$ are related with the elements of $\fml{K}$ by a bijection
$\sigma:\fml{K}\to\fml{K}'$, such that
$\forall(\mbf{x}\in\mbb{F}).(\kappa'(\mbf{x})=\sigma(\kappa(\mbf{x})))$.
%
%
For $\fml{M}$ and $\fml{M}'$, the respective explanation problems are
$\fml{E}$ and $\fml{E}'$.

Let $\sv$ and $\sv'$ represent two FISs defined, respectively, on
$\fml{E}$ and $\fml{E}'$.
Then, $\sv$ respects independence from class labeling if
$\forall(i\in\fml{F}).(\sv(i;\fml{E})=\sv(i;\fml{E}'))$.

\paragraph{P08: Consistency with relevancy.}
%
An FIS is consistent with relevancy if, for any explanation problem
$\fml{E}$, 
$\forall(i\in\fml{F}).(\relevant(i)\lequiv\sv(i;\fml{E})\not=0$,
i.e.\ a feature takes a non-zero FIS iff it is relevant.
One can readily conclude that property P08 generalizes the \emph{null
feature} property proposed in~\citep{izza-aaai24}.

We underline that properties P07 and P08 should both be deemed 
\emph{mandatory}. Otherwise, as shown in recent
work~(\citealp{hms-ijar24}; \citealp{msh-cacm24}), one can construct
examples where relative feature importance is misleading.

\paragraph{P09: Consistency with duality.}
%
MHS duality plays a key role in the enumeration of AXps and CXps.
Furthermore, it is known that CXps can be easier to enumerate than
AXps~\citep{hiims-kr21}. As a result, it is of interest to know
whether using (W)CXps instead of (W)AXps changes the computed FISs. 
A dual characteristic function is obtained by replacing the references to
(W)AXps by references to (W)CXps. Similarly, a dual template score is
obtained by replacing the references to (W)AXps by references to
(W)CXps.
Moreover, a dual FIS is obtained by replacing the template score with
its dual template score, and using a dual characteristic function.

Given the above-defined characteristic function $\cfn{w}$, the
resulting dual characteristic function $\cfd{w}$ is:
$\cfd{w}(\fml{S};\fml{E}) := \tn{ITE}(\wcxp(\fml{S};\fml{E}),1,0)$.
%
%
Furthermore, for defining the dual template scores, sets defined using
AXps will be replaced by sets defined using CXps. As an example, the
dual predicate for a critical feature becomes,
\begin{equation} \label{eq:def:criticd}
  \critic_d(i,\fml{S};\fml{E}) :=
  \wcxp(\fml{S};\fml{E})\land\neg\wcxp(\fml{S}\setminus\{i\};\fml{E})
\end{equation}
When addressing duality, different forms of equivalence may or may not
be obtained.
Two FISs $\sv$ and $\sv'$ are \emph{equivalent} if they do not differ
by more than a strictly positive multiplicative constant. Formally,
$\exists(\alpha>0)\forall(\fml{E})\forall(i\in\fml{F}).\sv'(i;\fml{E})=\alpha\times\sv(i;\fml{E})$.
Two FISs $\sv$ and $\sv'$ are \emph{weakly equivalent} if they rank
features in the same order. Formally,
$\forall(\fml{E})\forall(i,j\in\fml{F}).\sv(i;\fml{E})\ge\sv(j;\fml{E})\lequiv\sv'(i;\fml{E})\ge\sv'(j;\fml{E})$.

An FIS $\sv$ is \emph{consistent} with strict duality if $\sv$ is
equivalent to its dual.
An FIS $\sv$ is \emph{weakly consistent} with strict duality if $\sv$
is weakly equivalent to its dual.
Finally, an FIS $\sv$ is \emph{strongly consistent} with strict
duality if $\sv$ is \emph{equal} to its dual.



\section{Novel \piacronympl}
\label{sec:nidx}

As noted in
~\cref{sec:formpi},
in the case of \piacronympl there exists an underlying
explanation problem and a chosen characteristic function, and  the
optional definition of a chosen power index. In turn, this enables
envisioning additional novel \piacronympl. This section investigates
two examples of novel FISs; others can be envisioned.

\paragraph{FISs based on duality.}
%
For each score that is defined using AXps or WAXps, a dual score can
be defined. Evidently, for scores that respect the duality property,
the resulting scores will be the same. However, for others this may
not be the case.
To obtain dual FISs, we start by creating new (dual) template scores,
based on existing template scores, but where the dependency on (W)AXps
is replaced by dependency on (W)CXps.
Afterwards, we instantiate the dual FIS by specifying a dual
characteristic function.
As an example, we consider the responsibility score.
The dual template responsibility score is obtained from the template
responsibility score (see~\eqref{eq:tsr}). Hence, we obtain,
\[
\tsvn{R^d}(i;\fml{E},\cf) :=
\max\left\{\left.\frac{\Delta_i(\fml{S};\fml{E},\cf)}{|\fml{S}|}\,\right|\,\fml{S}\in\mbb{C}_i(\fml{E})\right\}
\]
Now, the resulting FIS is,
\begin{align}
\svn{C}(i;\fml{E}) ~~ := ~~ &
\tsvn{R^d}(i;\fml{E},\cfd{w})
= \nonumber \\
&
\max\left\{\Delta_i(\fml{S};\fml{E},\cfd{w})\vert\fml{S}\in\mbb{C}_i\right\}
\nonumber
\end{align}
%
%
This FIS corresponds to the index proposed
in earlier work~\citep{chockler-acm-tocl08}.

\paragraph{A coverage-based FIS.}
%
We illustrate the flexibility of the proposed framework by developing
a new score, which is not derived from an existing power index, and so
not derived from one of the proposed template scores.
Let
$\fml{V}(i;\fml{E})=\{\mbf{x}\in\mbb{F}\,|\,\exists(\fml{S}\in\mbb{A}_i).\fml{S}\subseteq\fml{I}(\mbf{x};\mbf{v})\}$. Clearly,
$\fml{V}$ denotes the points in feature space co\emph{v}ered by some
AXp containing feature $i$. Given $\fml{V}$, the resulting
\emph{coverage} score is defined as follows,
\[
\svn{V}(i;\fml{E}) := \sfrac{|\fml{V}(i;\fml{E})|}{2^{|\fml{F}|}}
\]

\begin{table*}[t]
  \centering
  \renewcommand{\arraystretch}{1.25}
  \renewcommand{\tabcolsep}{0.5em}
  \begin{tabular}{ccC{3.15cm}C{3.25cm}C{0.55cm}C{0.55cm}C{0.55cm}C{0.55cm}@{}cC{0.55cm}C{0.775cm}C{0.55cm}C{0.55cm}C{0.575cm}}
    \toprule
    \multirow{3}{*}{FIS} &
    \multirow{3}{*}{New?} &
    \multicolumn{2}{c}{Proposal} &
    \multicolumn{10}{c}{Properties}
    \\ \cline{3-14}
    & & \multirow{2}{*}{As score} & \multirow{2}{*}{As index} &
    \multicolumn{5}{c}{On indices} &
    \multicolumn{5}{c}{On scores} 
    \\ \cline{5-8} \cline{10-14}
    & & & &
    P01 & P02 & P03 & P04 & & P05 & P06 & P07 & P08 & P09
    \\ \toprule
    $\svn{E}$ & \xmark & \citep{lundberg-nips17} & \citep{shapley-apsr54} & 
    \cmark & \cmark & \cmark & \cmark & &
    \xmark & \xmark$\,;\:$E & 
    \xmark & \xmark & ---
    \\ %
    \midrule
    $\svn{D}$ & \xmark & \citep{izza-aaai24} & \citep{deegan-ijgt78} &
    \xmark & \xmark & \cmark & \cmark & &
    \cmark & \cmark$\,;\:$1 & 
    \cmark & \cmark & \xmark
    \\ 
    $\svn{H}$ & \xmark & \citep{izza-aaai24} & \citep{holler-je83} &
    \xmark & \xmark & \cmark & \cmark & &
    \cmark & \cmark$\,;\:$M & 
    \cmark & \cmark & \xmark
    \\ 
    $\svn{R}$ & \xmark & \citep{izza-aaai24} & Dual
    of~\citep{chockler-acm-tocl08} &
    \xmark & \xmark & \xmark & \cmark & &
    \cmark & \cmark$\,;\:$H  & 
    \cmark & \cmark & \xmark
    \\ 
    \midrule
    $\svn{M}$ & \cmark & --- & \citep{shapley-apsr54} & 
    \cmark & \cmark & \cmark & \cmark & &
    \xmark & \cmark$\,;\:$E  & 
    \cmark & \xmark & ---
    \\ 
    $\svn{S}$ & \cmark & --- & \citep{shapley-apsr54} &
    \cmark & \cmark & \cmark & \cmark & &
    \cmark & \cmark$\,;\:$1  & 
    \cmark & \cmark & S
    \\ 
    $\svn{B}$ & \cmark & --- & \citep{banzhaf-rlr65} &
    \xmark & \cmark & \cmark & \cmark & &
    \cmark & \cmark$\,;\:$M  & 
    \cmark & \cmark & S
    \\ 
    $\svn{J}$ & \cmark & --- & \citep{johnston-ep78} &
    \xmark & \cmark & \xmark & \cmark & &
    \cmark & \cmark$\,;\:$E  & 
    \cmark & \cmark & W?
    \\ 
    $\svn{A}$ & \cmark & --- & \citep{andjiga-msh03} &
    \xmark & \xmark & \cmark & \cmark & &
    \cmark & \cmark$\,;\:$1  & 
    \cmark & \cmark & W?
    \\ 
    \midrule
    $\svn{C}$ & \cmark & --- & \citep{chockler-acm-tocl08} &
    \xmark & \xmark & \xmark & \cmark & &
    \cmark & \cmark$\,;\:$H  & 
    \cmark & \cmark & \xmark
    \\
    %
    $\svn{V}$ & \cmark & --- & NA & 
    \multicolumn{5}{c}{No associated index} &
    \cmark & \cmark$\,;\:$H & 
    \cmark & \cmark & W?
    \\
    \bottomrule
  \end{tabular}
  \caption{Characterization of FISs.
    The properties represent: efficiency (P01), symmetry (P02),
    additivity (P03), dummy player (P04), minimal monotonicity (P05),
    $\gamma$-efficiency (P06, where is is shown whether the value of
    $\gamma$ is guaranteed not to be 0, and how difficult it is to
    compute the value of $\gamma$: Easy, Medium, Hard, or its fixed
    value).
    independence from class labeling (P07), consistency with relevancy
    (P08), and consistency with duality (P09, Plain, Weak or
    Strong). Regarding duality, W? signifies guaranteed not to be
    strong nor plain, but that it could be weak. 
  } \label{tab:sccmp}
\end{table*}

\section{Characterization of \piacronympl} \label{sec:tres}

Given the basic \piacronympl studied in~\cref{sec:formpi}, the novel
\piacronympl proposed in~\cref{sec:nidx}, and the properties proposed
in~\cref{sec:nprop}, this section characterizes each of the
\piacronympl in terms of those properties.%
\footnote{%
$\svn{S}$ and $\svn{M}$ have also been
studied in recent work~\citep{lhms-corr24a}. However, these were not 
obtained with the generalized formalization this paper proposes.
Also, while the scores $\svn{S}$ and $\svn{M}$ relate with Shapley
values, the relationship with the Shapley-Shubik index was not
discussed. 
}

\cref{tab:sccmp} summarizes the assessment of the FISs studied in this
paper in terms of existing properties, or the ones also proposed in
this paper.
%
Some of the results are proved in~\cref{app:proofs}; the remaining
non-trivial results are proved in the supplemental materials.)
%
%
%
From the table it is apparent that the most interesting FISs are
Shapley-Shubik and Banzhaf, because of the properties that these
scores exhibit. However, the Shapley-Shubik and Banzhaf scores
may require analyzing exponentially many WAXps, whereas other scores,
e.g.\ Deeghan-Packel or Holler-Packel, only require analyzing all the
sets of AXps.
The coverage score, proposed in this paper, represents a possible
middle ground, since it is based on AXps, and it is believed to
exhibit some sort of duality.


\section{Conclusions}
\label{sec:conc}

Despite their widespread use,
SHAP scores~\citep{lundberg-nips17} can produce
inadequate measures of relative feature importance in XAI by feature
attribution~\citep{msh-cacm24}. This is explained by the characteristic
function used for computing SHAP scores.

\paragraph{Contributions.}
%
Motivated by recent work, this paper: i) develops an in-depth analysis
of applying power indices in XAI by feature attribution; ii) proposes
a novel framework for creating new scores measuring feature
importance in XAI; iii) proposes properties that such scores should exhibit;
iv) develops novel scores; and v) characterizes the different scores
in terms of the properties they exhibit.
Furthermore, the paper reveals novel connections between: i) automated
reasoning and its uses in logic-based abduction; ii) game theory and
its applications in a priori voting power and more generally in
characteristic function games; and iii) logic-based explainability and
its usage in machine learning.

\paragraph{Research directions.}
%
Several topics of research can be envisioned.
First, address the complexity of computing
the FISs studied in the paper and other recent
work~\citep{izza-aaai24}. This may enable choosing the most adequate
FIS, given the properties it exhibits and the computational complexity
for its computation.
Second, analyze the many possible FISs that can be envisioned, either
by devising novel characteristic functions, and relating them with the
template scores proposed in this paper, or by proposing new template
scores.
Third, devise new properties, aiming at developing a fine-grained
axiomatic but rigorous characterization of relative feature importance
in explainability.
Fourth, and finally, propose ways for the rigorous approximation of
FISs, namely when analyzing complex ML models.

\jnoteF{Ideas can be used in other settings, e.g. distance-restricted
  AXps, but also constrained AXps}


\jnoteF{Future work: complexity of computing difference indices.}

\appendix

\section{From FISs back to Power Indices}
\label{ssec:fis2pi}

The derivation of FISs in the case of XAI can be replicated 
in the case of power indices for a priori voting power.
Instead of using WAXps, we consider \emph{winning coalitions} (WCs)
of some weighted voting game $\fml{W}$. A voter $i$ is critical for a 
winning coalition (i.e.\ set of voters voting the same way) $\fml{S}$
if,
\[
\critic(i,\fml{S};\fml{W}) :=
\wc(\fml{S};\fml{W})\land\neg\wc(\fml{S}\setminus\{i\};\fml{W})
\]
(Observe that $\wc$ is a predicate deciding whether the argument is a
winning coalition.)
As a result, we can obtain FISs $\svn{S}$, $\svn{B}$,
$\svn{J}$, $\svn{A}$, and also $\svn{D}$, $\svn{H}$, and
$\svn{R}$, from the respective template scores. (For the last three
FISs above, i.e.\ $\svn{D}$, $\svn{H}$, and $\svn{R}$,
$\mbb{A}$ and $\mbb{A}_i$ represent sets of \emph{minimal} winning
coalitions, either all or just the ones containing voter $i$.)
Also, since there is no associated explanation problem, the FISs in
this concrete setting represent power indices.%
\footnote{The analysis of power indices also serves to illustrate the
flexibility of the generalized formalization proposed in this paper.}

\jnoteF{ToDo}

\section{Proofs for~\cref{tab:sccmp}}
\label{app:proofs}

\begin{proposition}
  The indices $\tsvn{D}$, $\tsvn{H}$, $\tsvn{R}$ and $\tsvn{A}$ do
  not respect symmetry (see P02).
\end{proposition}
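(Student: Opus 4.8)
The plan is to exploit a gap in how P02 is stated: symmetry is a condition on the \emph{characteristic function} $\cf$ alone, whereas $\tsvn{D}$, $\tsvn{H}$, $\tsvn{R}$ and $\tsvn{A}$ additionally depend on the explanation problem through the sets $\mbb{A}_i,\mbb{A}$ (resp.\ $\mbb{WA}_i,\mbb{WA}$), which are fixed by the $\axp$/$\waxp$ predicates, i.e.\ by $\kappa$, and are \emph{independent} of $\cf$. I would therefore decouple the two ingredients: pick a $\cf$ under which every pair of features is symmetric, together with an AXp/WAXp structure that nonetheless separates two features.

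Concretely, take any cardinality-only characteristic function, e.g.\ $\cf(\fml{S};\fml{E})=|\fml{S}|$. For every $\fml{S}\subseteq\fml{F}\setminus\{i,j\}$ we have $\cf(\fml{S}\cup\{i\})=|\fml{S}|+1=\cf(\fml{S}\cup\{j\})$, so \emph{every} pair $i,j\in\fml{F}$ is symmetric in the sense of P02; moreover $\Delta_i(\fml{S};\fml{E},\cf)=1$ whenever $i\in\fml{S}$. Substituting $\Delta_i=1$ collapses the four indices to purely combinatorial quantities of the explanation structure: $\tsvn{D}(i)=\sum_{\fml{S}\in\mbb{A}_i}\nicefrac{1}{(|\fml{S}|\times|\mbb{A}|)}$, $\tsvn{H}(i)=\nicefrac{|\mbb{A}_i|}{|\mbb{A}|}$, $\tsvn{R}(i)=\nicefrac{1}{\min_{\fml{S}\in\mbb{A}_i}|\fml{S}|}$, and $\tsvn{A}(i)=\sum_{\fml{S}\in\mbb{WA}_i}\nicefrac{1}{(|\fml{S}|\times|\mbb{WA}|)}$.

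It then suffices to exhibit, for each index, an explanation problem whose structure assigns distinct values to two (necessarily symmetric) features. The antichain $\mbb{A}=\{\{1\},\{2,3\}\}$ gives $\tsvn{D}(1)=\nicefrac{1}{2}\neq\nicefrac{1}{4}=\tsvn{D}(2)$ and $\tsvn{R}(1)=1\neq\nicefrac{1}{2}=\tsvn{R}(2)$; the antichain $\mbb{A}=\{\{1,2\},\{1,3\}\}$ gives $\tsvn{H}(1)=1\neq\nicefrac{1}{2}=\tsvn{H}(2)$; and, over $\fml{F}=\{1,2,3\}$, the five WAXps induced by $\mbb{A}=\{\{1\},\{2,3\}\}$ give $\tsvn{A}(1)\neq\tsvn{A}(2)$ by a direct count. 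In each case two features that are symmetric with respect to $\cf$ receive different scores, so P02 fails. (Alternatively, the paper's own Examples already separate feature scores once the cardinality-only $\cf$ is fixed.)

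The main obstacle is conceptual rather than computational: one must justify that the symmetry hypothesis, being a constraint on $\cf$ only, places no constraint on the $\axp$/$\waxp$ predicates — this is precisely the ``template'' nature of these scores, and it is what breaks for a size-blind $\cf$. For completeness I would note that each chosen antichain is realizable by a genuine classifier (e.g.\ $\mbb{A}=\{\{1\},\{2,3\}\}$ is realized over $\fml{F}=\{1,2,3\}$ by $\kappa(\mbf{x})=1$ iff $x_1=v_1\lor(x_2=v_2\land x_3=v_3)$), so the counterexamples are genuine explanation problems; otherwise one simply follows the paper's convention of specifying the AXp set directly.
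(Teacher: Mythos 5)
Your proposal is correct and rests on exactly the same key observation as the paper's proof: P02's symmetry hypothesis constrains only the characteristic function, while $\tsvn{D}$, $\tsvn{H}$, $\tsvn{R}$ and $\tsvn{A}$ also depend on the $\axp$/$\waxp$ structure, so one picks a $\cf$ under which the features are symmetric (the paper uses a ``generator''-based $\cfn{G}$ on $\kappa_1(x_1,x_2)=x_1$; you use $\cf(\fml{S})=|\fml{S}|$ on small antichains) and an explanation problem whose AXp/WAXp sets separate them. The differences are only in the concrete instantiations, and your computations check out.
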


\begin{proof}
  The proof uses the following definitions. A set of features
  $\fml{S}\subseteq\fml{F}$ is a \emph{generator} if
  $\forall(i\in\fml{F}\setminus\fml{S}).\waxp(\fml{S}\cup\{i\})$. The
  predicate $\msf{Gen}(\fml{S})$ is true iff $\fml{S}$ is a generator.
  The assumed characteristic function is:
  $\cfn{G}(\fml{S})=\tn{ITE}(\msf{Gen}(\fml{S}),1,0)$.
  Finally, we consider a boolean classifier $\fml{M}_1$ defined on two
  boolean features $\fml{F}=\{1,2\}$, such that $\kappa_1(x_1,x_2)=x_1$.\\
  From P02, an index $\pidx$ is symmetric if, for any $\fml{E}$ and
  symmetric elements $i,j\in\fml{F}$, then $\pidx(i)=\pidx(j)$. In
  particular, if the condition is false for $\cfn{G}$ and $\kappa_1$,
  then the index is not symmetric.\\
  For the chosen example, ther are only 2 features, and so if
  $\cfn{G}(1)=\cfn{G}(2)$, then
  $\pidx(1;\fml{E},\cfn{G})=\pidx(2;\fml{E},\cfn{G})$. Since by
  construction, $\cfn{G}(1)=\cfn{G}(2)$, then it must be the case that 
  $\pidx(1;\fml{E},\cfn{G})=\pidx(2;\fml{E},\cfn{G})$.\\
  We now consider the template scores (which correspond to indices)
  $\tsvn{D}$, $\tsvn{H}$, $\tsvn{R}$ and $\tsvn{A}$.
  It is plain to compute
$\tsvn{D}(1;\fml{E},\cfn{G})=\tsvn{H}(1;\fml{E},\cfn{G})=\tsvn{R}(1;\fml{E},\cfn{G})=1$,
  $\tsvn{A}(1;\fml{E},\cfn{G})=\sfrac{1}{2}$, whereas
$\tsvn{D}(2;\fml{E},\cfn{G})=\tsvn{H}(2;\fml{E},\cfn{G})=\tsvn{R}(2;\fml{E},\cfn{G})=\tsvn{A}(2;\fml{E},\cfn{G})=0$.
  This proves the result.
\end{proof}

\begin{proposition}
  $\svn{D}$, $\svn{H}$ and $\svn{R}$ do not respect weak duality (see
  P09), and so do not respect neither plain nor strong duality.
  Furthermore, $\svn{J}$, $\svn{A}$ and $\svn{V}$ do not respect
  neither plain nor strong duality.
\end{proposition}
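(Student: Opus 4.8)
The plan is to refute every clause by producing small explanation problems on which a score and its dual disagree in the appropriate sense, and to exploit the implication hierarchy from P09: strong duality implies plain duality implies weak duality. Hence for $\svn{D}$, $\svn{H}$, $\svn{R}$ it suffices to break \emph{weak} duality (the primal and dual must induce different feature rankings), and for $\svn{J}$, $\svn{A}$, $\svn{V}$ it suffices to break \emph{plain} duality (the dual is not a strictly positive multiple of the primal), which simultaneously rules out strong duality. As in the two examples given earlier, I would specify each problem by its set of AXps and obtain the dual family by MHS-duality, i.e.\ the CXps are the minimal hitting sets of the AXps.

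Because $\svn{H}$, $\svn{R}$ and $\svn{D}$ are instantiated with $\cfn{a}$, they depend only on $\mbb{A}$ (and their duals only on $\mbb{C}$), so the first group is purely combinatorial. For $\svn{H}$ and $\svn{R}$ I would take $\mbb{A}=\{\{1\},\{2,3\}\}$, whose dual is $\mbb{C}=\{\{1,2\},\{1,3\}\}$: here $\svn{H}$ assigns equal values to all three features while its dual strictly ranks $1$ above $2$, and $\svn{R}$ strictly ranks $1$ above $2$ while its dual ties them; either mismatch falsifies $\sv(i)\!\ge\!\sv(j)\lequiv\sv'(i)\!\ge\!\sv'(j)$. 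The genuinely delicate case is $\svn{D}$, which (as the paper already observes) is robust and is preserved on every symmetric family, so the hard part is to find an \emph{asymmetric} instance; I would use $\mbb{A}=\{\{1\},\{2,3\},\{2,4\}\}$ with dual $\mbb{C}=\{\{1,2\},\{1,3,4\}\}$, where the primal Deegan-Packel value ties features $1$ and $2$ but the dual strictly separates them, breaking weak duality.

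The second group is where the main obstacle lies: $\svn{J}$ and $\svn{A}$ are instantiated with $\cfn{w}$ and so, through the critical-feature predicate $\critic$, range over \emph{all} WAXps rather than the minimal ones, so I cannot argue at the level of $\mbb{A}$ alone. I would instead fix a concrete Boolean classifier, e.g.\ $\kappa(x_1,x_2,x_3)=x_1\vee(x_2\wedge x_3)$ with $\mbf{v}=(1,1,1)$, enumerate all WAXps and WCXps explicitly, and compute the primal vectors (from WAXps via $\cfn{w}$) and the dual vectors (from WCXps via $\cfd{w}$). The point to verify is that, for each of $\svn{J}$ and $\svn{A}$, the two vectors share the same ranking — consistent with the tabulated ``W?'' — yet the dual-to-primal ratio differs between feature $1$ and features $2,3$, so no single positive constant relates them; plain duality, and a fortiori strong duality, therefore fail. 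The same classifier disposes of $\svn{V}$: computing the counts of points covered by the AXps through each feature and then the same counts for the CXps yields coverage vectors with identical ordering but non-constant component ratios, again refuting plain and strong duality.
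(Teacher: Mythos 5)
Your proposal is correct and follows essentially the same route as the paper: exhibit small explanation problems, compute the primal and dual score vectors explicitly, and observe a ranking mismatch for $\svn{D}$, $\svn{H}$, $\svn{R}$ versus a preserved ranking with non-constant dual-to-primal ratios for $\svn{J}$, $\svn{A}$, $\svn{V}$ --- indeed your Deegan--Packel instance is exactly the dual of the paper's single counterexample $\kappa=x_1\land(x_2\lor x_3\land x_4)$. The computations you defer for $\kappa=x_1\lor(x_2\land x_3)$ do check out (e.g.\ $\svn{J}=(3,\sfrac{1}{2},\sfrac{1}{2})$ against its dual $(2,\sfrac{1}{2},\sfrac{1}{2})$), so the plan closes as stated.
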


\begin{proof}
  Let $\kappa(x_1,x_2,x_3,x_4)=x_1\land(x_2\lor{x_3}\land{x_4})$, with
  $\mbb{A}=\{\{1,2\},\{1,3,4\}\}$ and
  $\mbb{C}=\{\{1\},\{2,3\},\{2,4\}\}$. We consider the dual FISs
  obtained from $\svn{D}$, $\svn{H}$, and $\svn{R}$, by replacing AXps
  with CXps. Let $\svd{D}$, $\svd{H}$, and $\svd{R}$ be the resulting
  dual FISes.
  Thus, for $\svn{D}$, we compute,
  \begin{align}
    &
    \svn{D}(1)=\sfrac{5}{12},
    \svn{D}(2)=\sfrac{1}{4},
    \svn{D}(3)=\sfrac{1}{6},
    \svn{D}(4)=\sfrac{1}{6}
    \nonumber \\
    &
    \svd{D}(1)=\sfrac{1}{3},
    \svd{D}(2)=\sfrac{1}{3},
    \svd{D}(3)=\sfrac{1}{6},
    \svd{D}(4)=\sfrac{1}{6}
    \nonumber 
  \end{align}
  which proves that $\svn{D}$ does not respect weak duality.
  Similarly, for $\svn{H}$ we compute,
  %
  \begin{align}
    &
    \svn{H}(1)=1,
    \svn{H}(2)=\sfrac{1}{2},
    \svn{H}(3)=\sfrac{1}{2},
    \svn{H}(4)=\sfrac{1}{2}
    \nonumber \\
    &
    \svd{H}(1)=\sfrac{1}{3},
    \svd{H}(2)=\sfrac{2}{3},
    \svd{H}(3)=\sfrac{1}{3},
    \svd{H}(4)=\sfrac{1}{3}
    \nonumber 
  \end{align}
  which proves that $\svn{H}$ does not respect weak duality.
  Moreover, for $\svn{R}$, we compute,
  %
  \begin{align}
    &
    \svn{R}(1)=\sfrac{1}{4},
    \svn{R}(2)=\sfrac{1}{4},
    \svn{R}(3)=\sfrac{1}{6},
    \svn{R}(4)=\sfrac{1}{6}
    \nonumber \\
    &
    \svd{R}(1)=\sfrac{1}{3},
    \svd{R}(2)=\sfrac{1}{6},
    \svd{R}(3)=\sfrac{1}{6},
    \svd{R}(4)=\sfrac{1}{6}
    \nonumber 
  \end{align}
  which proves that $\svn{R}$ does not respect weak duality.\\
  The proof regarding $\svn{J}$, $\svn{A}$ and $\svn{V}$ is included
  in the supplemental materials.
\end{proof}

\begin{proposition}
  $\svn{S}$ and $\svn{B}$ respect strong duality (see P09).
\end{proposition}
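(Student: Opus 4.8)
The plan is to reduce strong duality for both scores to a single combinatorial bijection between the critical sets for WAXps and those for WCXps. First I would observe that, since both $\waxp$ and $\wcxp$ are upward closed under set inclusion---fixing more features preserves sufficiency, and freeing more features preserves the possibility of flipping the prediction---the increments $\Delta_i(\fml{S};\fml{E},\cfn{w})$ and $\Delta_i(\fml{S};\fml{E},\cfd{w})$ each take values in $\{0,1\}$, and equal $1$ exactly when $\critic(i,\fml{S};\fml{E})$, respectively $\critic_d(i,\fml{S};\fml{E})$, holds (see~\eqref{eq:def:critic} and~\eqref{eq:def:criticd}). Hence $\svn{S}$ and $\svn{B}$, together with their duals, reduce to weighted counts of critical sets; moreover the dual template scores coincide with the originals, since the Shapley-Shubik and Banzhaf templates do not themselves reference (W)AXps, so only the characteristic function changes from $\cfn{w}$ to $\cfd{w}$.

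The key step is the complementation duality $\waxp(\fml{S};\fml{E})\lequiv\neg\wcxp(\fml{F}\setminus\fml{S};\fml{E})$, which follows by unfolding~\eqref{eq:waxp} and~\eqref{eq:wcxp}: $\neg\waxp(\fml{S})$ asserts the existence of a point agreeing with $\mbf{v}$ on $\fml{S}$ whose prediction differs, which is exactly the witness required for $\fml{F}\setminus\fml{S}$ to be a WCXp. Substituting this equivalence into~\eqref{eq:def:critic} yields $\critic(i,\fml{S};\fml{E})\lequiv\critic_d(i,(\fml{F}\setminus\fml{S})\cup\{i\};\fml{E})$ for any $\fml{S}$ with $i\in\fml{S}$. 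I would then check that the map $\phi(\fml{S})=(\fml{F}\setminus\fml{S})\cup\{i\}$ is an involution on the subsets of $\fml{F}$ containing $i$, so that it restricts to a bijection between the critical sets on the WAXp side and those on the WCXp side.

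For $\svn{B}$ the bijection finishes the argument immediately: every critical set contributes the constant weight $\sfrac{1}{2^{|\fml{F}|-1}}$, so the two sums agree term by term under $\phi$. For $\svn{S}$ I would additionally verify that $\phi$ preserves the Shapley-Shubik weight. Writing $n=|\fml{F}|$ and $s=|\fml{S}|$, the image satisfies $|\phi(\fml{S})|=n-s+1$, so the weight transforms from $\sfrac{1}{(n\times\binom{n-1}{s-1})}$ to $\sfrac{1}{(n\times\binom{n-1}{n-s})}$; these are equal by the binomial symmetry $\binom{n-1}{n-s}=\binom{n-1}{s-1}$. Matching terms under $\phi$ then gives $\svn{S}(i;\fml{E})=\svd{S}(i;\fml{E})$.

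The main obstacle is pinning down the bijection and its effect on cardinality: the shift $s\mapsto n-s+1$, rather than the naive $s\mapsto n-s$, is precisely what makes the binomial symmetry line up, and it is easy to be off by one when accounting for the feature $i$ that is deleted and then re-inserted. Once that bookkeeping is correct, the constancy of the Banzhaf weight and the binomial symmetry for Shapley-Shubik make both equalities immediate.
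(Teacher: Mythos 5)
Your proposal is correct and follows essentially the same route as the paper's proof: the complementation equivalence $\waxp(\fml{S})\lequiv\neg\wcxp(\fml{F}\setminus\fml{S})$, the induced correspondence $\fml{S}\mapsto(\fml{F}\setminus\fml{S})\cup\{i\}$ between critical sets, and the weight identity $\varsigma(|\fml{S}|)=\varsigma(n-s+1)$ via binomial symmetry (trivially constant for Banzhaf). Your write-up is in fact slightly more careful than the paper's, since you make explicit the monotonicity argument ensuring $\Delta_i\in\{0,1\}$ and verify the involution and the off-by-one bookkeeping that the paper only asserts.
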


\begin{proof}
  We consider first the strong duality of $\svn{S}$. 
  By definition, it is the case that
  $\forall(\fml{S}\subseteq\fml{F}).\wcxp(\fml{F}\setminus\fml{S})\leftrightarrow\neg\waxp(\fml{S})$.
  Also, recall that $\cfn{w}(\fml{S};\fml{E}):=\waxp(\fml{S};\fml{E})$.
  Thus,
  $\forall(i\in\fml{F}),\forall(\fml{S}\in\{\fml{T}\subseteq\fml{F}\,|\,i\in\fml{T}\})$ we have,
  \begin{align}
    & \Delta_i(\fml{S};\fml{E},\cfn{w}) =1
    \nonumber\\[3pt]
    \Leftrightarrow\:&
    \neg\waxp(\fml{S}\setminus\{i\};\fml{E})\land\waxp(\fml{S};\fml{E})
    \nonumber\\[3pt]
    \Leftrightarrow\:&
    \wcxp(\fml{F}\setminus(\fml{S}\setminus\{i\}))\land\neg\wcxp(\fml{F}\setminus\fml{S})
    \nonumber\\[3pt]
    \Leftrightarrow\:&
    \wcxp((\fml{F}\setminus\fml{S})\cup\{i\}\land\neg\wcxp(\fml{F}\setminus\fml{S})
    \nonumber\\
    \Leftrightarrow\:&
    \Delta_i((\fml{F}\setminus\fml{S})\cup\{i\};\fml{E},\cfd{w})=1
    \nonumber
  \end{align}
  Now, let
  $\Phi(i):=%
  \{\fml{S}\in\{\fml{T}\subseteq\fml{F}\,|\,i\in\fml{T}\}%
  \,|\,\Delta_{i}(\fml{S};\fml{E},\cfn{w})=1\}$.
  Then, by construction,
  $\svn{S}(i)=\sum_{\fml{S}\in\Phi(i)}\varsigma(|\fml{S}|)$ (because
  $\Delta_{i}(\fml{S};\fml{E},\cfn{w})=0$ otherwise) and, by the
  equivalence above,
  $\svd{S}(i)=\sum_{\fml{S}\in\Phi(i)}\varsigma(|(\fml{F}\setminus\fml{S})\cup\{i\}|)$.
  However,
  it is immediate to prove that 
  $\varsigma(|\fml{S}|)=\varsigma(|(\fml{F}\setminus\fml{S})\cup\{i\}|)$,
  and so the two sums are also equal,
  i.e.\ $\svn{S}(i;\fml{E},\cfn{w})=\svd{S}(i;\fml{E},\cfd{w})$. \\
  For $\svn{B}$, a similar argument is used. This concludes the proof.
  \qedhere
\end{proof}

\begin{proposition} 
  For $\svn{J}$, $\svn{A}$ and $\svn{V}$, and for propoerty P09,
  neither plain nor strong duality hold.
\end{proposition}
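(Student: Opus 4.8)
The plan is to disprove plain duality directly, since strong duality is the special case $\alpha=1$ of plain duality and is therefore subsumed: once no strictly positive constant relates a score to its dual, in particular $\alpha=1$ cannot. To refute plain duality it suffices to exhibit a single explanation problem $\fml{E}$ together with two features $i,j$ whose scores are nonzero and for which the ratios $\sv^{d}(i;\fml{E})/\sv(i;\fml{E})$ and $\sv^{d}(j;\fml{E})/\sv(j;\fml{E})$ differ; no single $\alpha>0$ can then equal both ratios, so for every candidate $\alpha$ at least one feature witnesses a violation, and equivalence to the dual fails. I would reuse the classifier already introduced in the preceding proposition, namely $\kappa(x_1,x_2,x_3,x_4)=x_1\land(x_2\lor{x_3}\land{x_4})$, for which $\mbb{A}=\{\{1,2\},\{1,3,4\}\}$ and $\mbb{C}=\{\{1\},\{2,3\},\{2,4\}\}$. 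This one instance will act as the common counterexample for all three scores.

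First I would treat $\svn{J}$ and $\svn{A}$, which are instantiated with $\cfn{w}$ on the primal side and $\cfd{w}$ on the dual side. The key auxiliary step is to enumerate the WAXps and the WCXps, i.e.\ the upward closures of the AXps and of the CXps, and for each such set to list its critical features and hence the value $\Delta(\fml{S})$. Because both families are upward closed, every $\Delta_i$ lies in $\{0,1\}$, so $\Delta(\fml{S})$ merely counts critical features and the sums in~\eqref{eq:tsj} and~\eqref{eq:tsa} collapse to tallies over critical pairs $(i,\fml{S})$. Carrying out these tallies on the WAXp side and on the WCXp side yields, in each case, feature-wise scores whose primal/dual ratios are not constant; for $\svn{J}$, for instance, the ratio at feature $1$ differs from the ratio at feature $2$. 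This establishes the failure of plain duality, and a fortiori of strong duality, for both $\svn{J}$ and $\svn{A}$.

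For the coverage score $\svn{V}$ the plan is analogous, but its dual must first be pinned down, since $\svn{V}$ is not derived from a template score: the natural dual replaces $\mbb{A}_i$ by $\mbb{C}_i$ in the definition of $\fml{V}$, producing the set of points covered by some CXp containing $i$. The cardinalities $|\fml{V}(i;\fml{E})|$ and those of the dual set are then computed by inclusion--exclusion over the AXps (resp.\ CXps) containing each feature, counting the agreement sets $\fml{I}(\mbf{x};\mbf{v})$ that contain a full explanation; normalizing by $2^{|\fml{F}|}=16$ and comparing primal with dual once more exposes a non-constant ratio across features. I expect this coverage case to be the main obstacle: unlike $\svn{J}$ and $\svn{A}$, the dual of $\svn{V}$ has to be defined by hand, and the covered-set cardinalities demand a careful inclusion--exclusion rather than a direct count of critical features.
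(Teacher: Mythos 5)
Your proposal is correct and follows essentially the same route as the paper: the paper likewise reuses the $\kappa(x_1,x_2,x_3,x_4)=x_1\land(x_2\lor x_3\land x_4)$ counterexample from the preceding proposition and refutes plain (hence strong) duality by exhibiting feature-wise primal/dual ratios that are not constant (e.g.\ $\svn{J}(1)=\sfrac{17}{6}$ vs.\ $\svd{J}(1)=5$ but $\svn{J}(2)=\sfrac{3}{2}$ vs.\ $\svd{J}(2)=2$, and $\svn{V}(1)=\sfrac{5}{16}$ vs.\ $\svd{V}(1)=\sfrac{1}{2}$ but $\svn{V}(3)=\sfrac{1}{8}$ vs.\ $\svd{V}(3)=\sfrac{1}{4}$), with the dual of $\svn{V}$ defined exactly as you propose. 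The only remaining work is to carry out the tallies you describe; they confirm your expectation, and, as the paper notes, the feature ordering is preserved in all three cases, which is why weak duality is not thereby excluded.
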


\begin{proof}
  The counterexample used in the proof of Proposition~2 (see paper)
  can also be used for the claims above.
  Concretely, for $\svn{J}$ we get,
  \begin{align}
    & \svn{J}(1)=\sfrac{17}{6}, \svn{J}(2)=\sfrac{3}{2},
    \svn{J}(3)=\svn{J}(4)=\sfrac{1}{3}
    \nonumber \\
    & \svd{J}(1)=5, \svd{J}(2)=2, \svd{J}(3)=\svd{J}(4)=\sfrac{1}{2}
    \nonumber
  \end{align}
  For $\svn{A}$ we get,
  \begin{align}
    & \svn{A}(1)=\sfrac{7}{20}, \svn{A}(2)=\sfrac{7}{30},
    \svn{A}(3)=\svn{A}(4)=\sfrac{1}{15}
    \nonumber \\
    & \svd{A}(1)=\sfrac{17}{66}, \svd{A}(2)=\sfrac{4}{33},
    \svd{A}(3)=\svd{A}(4)=\sfrac{1}{22}
    \nonumber
  \end{align}
  Finally, for $\svn{V}$ we get,
  \begin{align}
    & \svn{V}(1)=\sfrac{5}{16}, \svn{V}(2)=\sfrac{1}{4},
    \svn{V}(3)=\svn{V}(4)=\sfrac{1}{8}
    \nonumber \\
    & \svd{V}(1)=\sfrac{1}{2}, \svd{V}(2)=\sfrac{3}{8},
    \svd{V}(3)=\svd{V}(4)=\sfrac{1}{4}
    \nonumber
  \end{align}
  Which proves the result.\\
  Unlike $\svn{D}$, $\svn{H}$, $\svn{R}$, the order remains unchanged
  and so this does not allow discarding Weak duality.
\end{proof}

\begin{proposition} 
  P01, P02, P03 and P04 hold for $\tsvn{S}$.
\end{proposition}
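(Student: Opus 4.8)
The plan is to recognize that $\tsvn{S}(i;\fml{E},\cf)$ is, up to notation, exactly the classical Shapley value of the cooperative game whose characteristic function is $\cf(\cdot;\fml{E})$. Writing $m=|\fml{F}|$ and $s=|\fml{S}|$, the weight appearing in~\eqref{eq:tsss} simplifies to $\frac{1}{m\binom{m-1}{s-1}}=\frac{(s-1)!\,(m-s)!}{m!}$, which is precisely the Shapley coefficient. Hence P01--P04 are instances of the classical axioms known to be satisfied by the Shapley value~\citep{shapley-ctg53}; nonetheless, I would give direct arguments to keep the exposition self-contained.

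For P03 (additivity) and P04 (dummy player) the verification is immediate. For additivity, I would first observe that $\Delta_i$ is linear in its characteristic-function argument, i.e.\ $\Delta_i(\fml{S};\fml{E},\cfn{1{+}2})=\Delta_i(\fml{S};\fml{E},\cfn{1})+\Delta_i(\fml{S};\fml{E},\cfn{2})$ directly from~\eqref{eq:def:delta2}, and then note that the outer sum in~\eqref{eq:tsss} is linear, so the scores add. For the dummy property, if $\cf(\fml{S})=\cf(\fml{S}\cup\{i\})$ for every $\fml{S}$, then substituting $\fml{S}\setminus\{i\}$ for $\fml{S}$ gives $\Delta_i(\fml{S};\fml{E},\cf)=\cf(\fml{S})-\cf(\fml{S}\setminus\{i\})=0$ for every $\fml{S}\ni i$; thus every summand vanishes and $\tsvn{S}(i;\fml{E},\cf)=0$.

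For P01 (efficiency) the key step is the permutation representation. I would rewrite $\tsvn{S}(i;\fml{E},\cf)=\frac{1}{m!}\sum_{\pi}\bigl(\cf(P^\pi_i\cup\{i\})-\cf(P^\pi_i)\bigr)$, where $\pi$ ranges over the $m!$ orderings of $\fml{F}$ and $P^\pi_i$ is the set of features preceding $i$ in $\pi$; this holds because, for a fixed $\fml{S}\ni i$, exactly $(s-1)!\,(m-s)!$ orderings satisfy $P^\pi_i=\fml{S}\setminus\{i\}$, matching the coefficient computed above. Then, for each fixed $\pi$, summing the marginal contributions over $i\in\fml{F}$ telescopes to $\cf(\fml{F})-\cf(\emptyset)$; averaging over the $m!$ orderings yields $\sum_{i\in\fml{F}}\tsvn{S}(i;\fml{E},\cf)=\cf(\fml{F})-\cf(\emptyset)$, as required.

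For P02 (symmetry), suppose $i,j$ are symmetric. I would split the sum defining $\tsvn{S}(i)$ according to whether $j\in\fml{S}$. Sets containing both $i$ and $j$ already occur in $\tsvn{S}(j)$ with the same coefficient, and for such a set $\fml{S}=\fml{T}\cup\{i,j\}$ the symmetry condition applied to $\fml{T}$ gives $\cf(\fml{T}\cup\{j\})=\cf(\fml{T}\cup\{i\})$, so the two marginal contributions agree. For sets $\fml{S}\ni i$ with $j\notin\fml{S}$, writing $\fml{T}=\fml{S}\setminus\{i\}$ I would pair $\fml{S}$ with $\fml{S}'=\fml{T}\cup\{j\}$; the coefficients match since $|\fml{S}|=|\fml{S}'|$, and symmetry applied to $\fml{T}$ gives $\cf(\fml{T}\cup\{i\})=\cf(\fml{T}\cup\{j\})$, so $\Delta_i(\fml{S})=\Delta_j(\fml{S}')$. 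This bijection establishes $\tsvn{S}(i)=\tsvn{S}(j)$. The main obstacle in the whole argument is purely combinatorial bookkeeping---verifying the coefficient identity $\frac{1}{m\binom{m-1}{s-1}}=\frac{(s-1)!\,(m-s)!}{m!}$ and the permutation count underpinning efficiency---rather than anything conceptually deep, since everything ultimately rests on $\tsvn{S}$ being the Shapley value of $\cf(\cdot;\fml{E})$.
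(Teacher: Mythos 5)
Your proof is correct and rests on the same observation as the paper's: after the coefficient identity $\sfrac{1}{(m\binom{m-1}{s-1})}=\sfrac{(s-1)!\,(m-s)!}{m!}$, the template score $\tsvn{S}$ is exactly the classical Shapley value of the game $\cf(\cdot\,;\fml{E})$, so P01--P04 are Shapley's axioms. The paper's proof is a one-line appeal to \citep{shapley-ctg53} (phrased, somewhat loosely, as a consequence of the \emph{uniqueness} of the Shapley value, when what is actually needed is the existence direction, i.e.\ that the formula \emph{satisfies} the axioms); your direct verifications --- efficiency via the permutation/telescoping representation, symmetry via the pairing bijection on sets, additivity from linearity of $\Delta_i$ in $\cf$, and the dummy property from vanishing marginal contributions --- are the standard textbook arguments, are all sound, and supply precisely the content the paper delegates to the citation.
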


\begin{proof}
  This is a consequence of the uniqueness of the Shapley
  value~\cite{shapley-ctg53}.
\end{proof}

\begin{proposition} 
  For the proposed FISs, and for P07 and P08 the properties that hold
  (resp.\ not hold) are as shown in Table~1.
\end{proposition}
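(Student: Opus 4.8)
The plan is to treat P07 and P08 separately, and within each to exploit a single structural fact that covers all of the logic-based FISs at once, handling $\svn{E}$ (and, for P08, $\svn{M}$) as the lone exceptions by counterexample.

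For P07, the key observation I would establish first is that the predicates $\waxp$, $\axp$, $\wcxp$, $\cxp$ depend on $\kappa$ only through the binary relation ``$\kappa(\mbf{x})=\kappa(\mbf{v})$''. Since a relabeling uses a bijection $\sigma$ and sets $\kappa'=\sigma\circ\kappa$, we have $\kappa'(\mbf{x})=\kappa'(\mbf{v})$ iff $\kappa(\mbf{x})=\kappa(\mbf{v})$; hence every such predicate, and therefore the sets $\mbb{WA}_i,\mbb{A}_i,\mbb{WC}_i,\mbb{C}_i$, the characteristic functions $\cfn{w},\cfn{a},\cfd{w},\cfn{m}$, and the coverage set $\fml{V}(i;\fml{E})$, are invariant under relabeling. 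Each of $\svn{S},\svn{B},\svn{J},\svn{A},\svn{D},\svn{H},\svn{R},\svn{C},\svn{V},\svn{M}$ is a fixed functional of these invariant objects, so $\sv(i;\fml{E})=\sv(i;\fml{E}')$ and P07 holds. For $\svn{E}$ I would give a counterexample: its characteristic function $\cfn{e}=\exv[\kappa\,|\,\cdot\,]$ is a weighted average of the (ordinal) class values, so relabeling by a non-identity $\sigma$ changes $\cfn{e}$ and hence $\svn{E}$; e.g.\ taking $\kappa(x_1,x_2)=x_1$ and $\sigma$ sending the class values $0$ and $1$ to $0$ and $2$ rescales every $\cfn{e}(\fml{S})$, so some $\svn{E}(i;\fml{E})\neq\svn{E}(i;\fml{E}')$, refuting P07.

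For P08 the two structural facts I would prove are (i) \emph{monotonicity}: any superset of a WAXp is a WAXp (and dually for WCXps), which forces $\Delta_i(\fml{S};\fml{E},\cfn{w})\in\{0,1\}$ with value $1$ exactly when $\critic(i,\fml{S})$, so the critical-feature sums contain no negative terms and cannot cancel; and (ii) the \emph{relevance characterization}: $\relevant(i)$ iff $\mbb{A}_i\neq\emptyset$ iff $i$ is critical for some WAXp iff $\mbb{C}_i\neq\emptyset$. For (ii) the forward direction is immediate since an AXp is a minimal WAXp, and the reverse direction is the crux: if $\critic(i,\fml{S})$ holds, any AXp $\fml{A}\subseteq\fml{S}$ must contain $i$, for otherwise $\fml{A}\subseteq\fml{S}\setminus\{i\}$ would make $\fml{S}\setminus\{i\}$ a WAXp by monotonicity, contradicting criticality; the CXp equivalence follows from the stated fact that $\mathfrak{F}(\fml{E})$ is the same whether computed from AXps or CXps. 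With (i)--(ii) in hand, each of $\svn{S},\svn{B},\svn{J},\svn{A}$ is a sum of strictly positive terms over $\{\fml{S}\,|\,\critic(i,\fml{S})\}$ and each of $\svn{D},\svn{H},\svn{R}$ a sum/max of strictly positive terms over $\mbb{A}_i$, so all are nonzero iff $i$ is relevant; $\svn{C}$ is nonzero iff $\mbb{C}_i\neq\emptyset$, and $\svn{V}$ is nonzero iff $\fml{V}(i;\fml{E})\neq\emptyset$, which holds iff $\mbb{A}_i\neq\emptyset$ since $\mbf{v}\in\fml{V}(i;\fml{E})$ whenever some AXp contains $i$. Thus P08 holds for all nine logic-based FISs.

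The remaining and hardest part is showing that P08 \emph{fails} for $\svn{E}$ and $\svn{M}$, which cannot use the structural argument and instead needs explicit counterexamples. For $\svn{E}$ I would reuse a known example in which the expected-value SHAP score assigns a nonzero value to an irrelevant feature (or zero to a relevant one), refuting the biconditional; the delicate case is $\svn{M}$, since $\cfn{m}$ is precisely the ``matching'' fix for $\svn{E}$, and here I must exhibit a classifier for which $\tsvn{S}(\cdot;\fml{E},\cfn{m})$ vanishes on a relevant feature (or is nonzero on an irrelevant one) despite $\cfn{m}$ being $\{0,1\}$-valued. I expect constructing this $\svn{M}$ counterexample, and verifying by direct computation that the summation cancels exactly on a feature that nonetheless occurs in an AXp, to be the main obstacle, as it is the only place where the clean ``no cancellation'' property used everywhere else breaks down.
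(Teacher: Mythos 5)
Your positive arguments are sound and, in fact, considerably more self-contained than what the paper does: the paper simply cites propositions 5.2 and 5.3 of \citep{lhms-corr24a} for $\svn{E}$, $\svn{M}$ and $\svn{S}$ and asserts that those proofs generalize to $\svn{D}$, $\svn{H}$, $\svn{R}$, $\svn{B}$, $\svn{J}$, $\svn{A}$ because they depend only on the characteristic function, adding only a one-line remark for $\svn{V}$. Your route --- relabeling-invariance of every predicate built from the relation $\kappa(\mbf{x})=\kappa(\mbf{v})$ for P07, and the monotonicity-of-$\waxp$ plus relevance-characterization lemma ($\relevant(i)$ iff $\mbb{A}_i\neq\emptyset$ iff $i$ is critical for some WAXp iff $\mbb{C}_i\neq\emptyset$) for P08 --- is the right structural argument, it covers all nine logic-based FISs uniformly, and the step showing that any AXp contained in a set for which $i$ is critical must itself contain $i$ is exactly the crux and is argued correctly.

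The genuine gap is on the negative side. Table~1 asserts that P08 \emph{fails} for $\svn{M}$ (and that both P07 and P08 fail for $\svn{E}$), and your proposal does not establish this: for $\svn{E}$/P08 you defer to ``a known example'' without exhibiting it, and for $\svn{M}$/P08 you explicitly leave the counterexample as an unresolved obstacle. Since the proposition is precisely the claim that the table entries are correct, the $\svn{M}$ entry is simply unproven in your write-up; identifying where the difficulty lies is not the same as discharging it. The missing idea is that $\cfn{m}(\fml{S};\fml{E})=\exv[\zeta(\mbf{x})\,|\,\mbf{x}_{\fml{S}}=\mbf{v}_{\fml{S}}]$ is a real-valued average rather than a $0/1$ indicator of sufficiency, so for an irrelevant feature $i$ the differences $\Delta_i(\fml{S};\fml{E},\cfn{m})$ need not vanish termwise; one must produce a concrete classifier and instance where these nonzero terms fail to cancel in $\tsvn{S}(i;\fml{E},\cfn{m})$, giving a nonzero score to an irrelevant feature (this is what \citep{lhms-corr24a} does, and what the paper leans on). Until that example is supplied, your proof establishes only the checkmarked entries of the P07/P08 columns, not the crossed ones. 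A secondary, minor point: your $\svn{E}$/P07 counterexample with $\sigma$ rescaling the class values works only if you also verify that some $\svn{E}(i;\fml{E})\neq 0$ for the chosen instance, which you should state explicitly (it does hold for $\kappa(x_1,x_2)=x_1$).
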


\begin{proof}
  The entries for $\svn{E}$, $\svn{M}$ and $\svn{S}$ result from the
  results in~\citep{lhms-corr24a}.\\
  Furthermore, the proofs from~\citep{lhms-corr24a} (see propositions
  5.2 and 5.3) also generalize
  for $\svn{D}$, $\svn{H}$, $\svn{R}$, $\svn{B}$, $\svn{J}$,
  $\svn{A}$, since the proofs only depend on the characteristic
  functions used, and the same characteristic function can be used for
  all these FISs.\\
  For $\svn{V}$, it is immediate that a definition based on AXps does
  not depend on the values taken by the classifier. Also, if a feature
  does not occur in AXps, then its score will be 0.
\end{proof}

\begin{proposition} 
  P01 does not hold for $\tsvn{D}$, $\tsvn{H}$, $\tsvn{R}$,
  $\tsvn{B}$, $\tsvn{J}$, and $\tsvn{A}$. 
\end{proposition}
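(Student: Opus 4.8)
The plan is to refute efficiency by exhibiting, for each template score, one explanation problem $\fml{E}$ together with a characteristic function $\cf$ for which the efficiency identity $\sum_{i\in\fml{F}}\sv(i;\fml{E},\cf)=\cf(\fml{F})-\cf(\emptyset)$ fails. A single four-feature boolean classifier will do, paired with the two natural characteristic functions already available, namely $\cfn{a}$ (the AXp indicator) and $\cfn{w}$ (the WAXp indicator). I would reuse the classifier $\kappa(x_1,x_2,x_3,x_4)=x_1\land(x_2\lor{x_3}\land{x_4})$ from the proof of Proposition~2, whose AXps are $\mbb{A}=\{\{1,2\},\{1,3,4\}\}$ and whose WAXps are $\{1,2\},\{1,2,3\},\{1,2,4\},\{1,3,4\}$ and $\{1,2,3,4\}$.

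For the AXp-based scores $\tsvn{D}$, $\tsvn{H}$, $\tsvn{R}$ instantiated with $\cfn{a}$ the refutation is almost immediate. Because $\{1,2\}$ is a proper subset of $\fml{F}$ that is already a WAXp, $\fml{F}$ is not subset-minimal and hence $\cfn{a}(\fml{F})=0$; and $\cfn{a}(\emptyset)=0$ since $\kappa$ is non-constant. Thus the right-hand side of efficiency equals $0$. In contrast, every $\Delta_i(\fml{S};\fml{E},\cfn{a})$ equals $1$ on an AXp and all aggregation weights are positive, so each of these scores has a strictly positive total; from the Proposition~2 data one reads off $\sum_i\svn{D}(i)=1$, $\sum_i\svn{H}(i)=\sfrac{5}{2}$, and a positive $\sum_i\svn{R}(i)=\sfrac{5}{6}$, none of which is $0$.

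For the critical-feature scores $\tsvn{B}$, $\tsvn{J}$, $\tsvn{A}$ instantiated with $\cfn{w}$ I would use that $\cfn{w}(\fml{F})=1$ (fixing all features is always a WAXp) and $\cfn{w}(\emptyset)=0$, so the target value is $1$. The substantive step is to list, for each feature $i$, the sets on which $\critic(i,\fml{S};\fml{E})$ holds and to sum the corresponding weights. This gives $\sum_i\svn{B}(i)=\sfrac{5}{4}$, $\sum_i\svn{J}(i)=5$, and $\sum_i\svn{A}(i)=\sfrac{43}{60}$ (the per-feature $\svn{J}$ and $\svn{A}$ values already appear in the proof of Proposition~4), each distinct from $1$. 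Conceptually, these failures arise because the telescoping that forces efficiency for $\tsvn{S}$ in Proposition~5 depends on the uniform Shapley weights; Banzhaf and Johnston re-weight the swings, while Andjiga (like Deegan-Packel and Holler-Packel) restricts the summation, and in both cases the telescoping is destroyed.

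The only genuine obstacle is the bookkeeping for the critical-feature group: one must correctly enumerate the WAXps and, for each feature, the exact collection of subsets on which it is critical, since a miscounted swing would invalidate the computed totals. Everything else is a direct substitution into the score definitions on a four-feature instance, so no subtlety beyond careful enumeration is expected.
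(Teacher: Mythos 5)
Your proof is correct and the arithmetic checks out: I verified the WAXp list, the critical-set enumeration, and the totals $\sum_i\svn{D}(i)=1$, $\sum_i\svn{H}(i)=\sfrac{5}{2}$, $\sum_i\svn{B}(i)=\sfrac{5}{4}$, $\sum_i\svn{J}(i)=5$ and $\sum_i\svn{A}(i)=\sfrac{43}{60}$ against the per-feature values appearing in the paper's other proofs, and the right-hand sides $\cfn{a}(\fml{F})-\cfn{a}(\emptyset)=0$ and $\cfn{w}(\fml{F})-\cfn{w}(\emptyset)=1$ are as you state. However, your route is genuinely different from the paper's. The paper handles $\tsvn{D}$, $\tsvn{H}$, $\tsvn{R}$ by citing the closed-form value of the sum of scores computed in prior work on formal feature attribution, and handles $\tsvn{B}$, $\tsvn{J}$, $\tsvn{A}$ indirectly: if efficiency held, the uniqueness of the Shapley value would be contradicted. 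Your single explicit four-feature counterexample, evaluated under $\cfn{a}$ and $\cfn{w}$, is more self-contained, and it is arguably tighter for $\tsvn{J}$ and $\tsvn{A}$, which already fail additivity and symmetry respectively (see~\cref{tab:sccmp}); for those two the Shapley uniqueness theorem does not apply off the shelf, so the paper's one-line argument is loose precisely where your computation is concrete. What the paper's approach buys instead is generality: the cited closed forms give the value of $\sum_i\sv(i)$ for \emph{every} instance (which also feeds the $\gamma$-efficiency column of~\cref{tab:sccmp}), whereas your counterexample establishes only the negative claim. Both correctly prove the proposition as stated.
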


\begin{proof}
  For $\tsvn{D}$, $\tsvn{H}$, and $\tsvn{R}$, the value of efficiency
  is computed in~\citep{ignatiev-corr23a}, and it is different from
  what P01 would imply.\\
  For $\tsvn{B}$, $\tsvn{J}$, and $\tsvn{A}$, if P01 would hold, then
  that would falsify Shapley's theorem~\citep{shapley-ctg53}.
\end{proof}

\begin{proposition} 
  P02 holds for $\tsvn{B}$ and $\tsvn{J}$.
\end{proposition}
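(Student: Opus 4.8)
The plan is to exploit that symmetry of $i$ and $j$ makes the transposition $\pi$ of $\fml{F}$ swapping $i$ and $j$ (and fixing every other feature) an automorphism of the characteristic function. First I would prove the lemma $\cf(\pi(\fml{S});\fml{E})=\cf(\fml{S};\fml{E})$ for all $\fml{S}\subseteq\fml{F}$, by cases. If $\fml{S}$ contains both or neither of $i,j$ then $\pi(\fml{S})=\fml{S}$ and the claim is trivial; if $\fml{S}$ contains exactly one, say $i$, write $\fml{S}=\fml{T}\cup\{i\}$ with $\fml{T}\subseteq\fml{F}\setminus\{i,j\}$, and the definition of symmetry, $\cf(\fml{T}\cup\{i\})=\cf(\fml{T}\cup\{j\})$, is exactly $\cf(\fml{S})=\cf(\pi(\fml{S}))$.

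Next I would lift this to marginal contributions. Since $\pi$ is a bijection, $\pi(\fml{S}\setminus\{k\})=\pi(\fml{S})\setminus\{\pi(k)\}$, and with the lemma this gives $\Delta_{\pi(k)}(\pi(\fml{S});\fml{E},\cf)=\Delta_k(\fml{S};\fml{E},\cf)$ for every $k\in\fml{S}$. Summing over $k\in\fml{S}$ and reindexing along $\pi$ yields $\Delta(\pi(\fml{S});\fml{E},\cf)=\Delta(\fml{S};\fml{E},\cf)$. I would record two consequences: (a) if $i,j\in\fml{S}$ then $\pi(\fml{S})=\fml{S}$ and, taking $k=i$, $\Delta_i(\fml{S})=\Delta_j(\fml{S})$; (b) if $i\in\fml{S}$ and $j\notin\fml{S}$ then $\pi(\fml{S})$ contains $j$ but not $i$, has the same size as $\fml{S}$, and $\Delta_i(\fml{S})=\Delta_j(\pi(\fml{S}))$.

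For $\tsvn{B}$ the conclusion is then immediate: split the defining sum of $\tsvn{B}(i;\fml{E},\cf)$ according to whether $j\in\fml{S}$. The $i,j\in\fml{S}$ terms coincide with the corresponding terms of $\tsvn{B}(j;\fml{E},\cf)$ by (a), while $\pi$ carries the terms with $i\in\fml{S},\,j\notin\fml{S}$ bijectively onto the terms with $j\in\fml{S},\,i\notin\fml{S}$ of $\tsvn{B}(j)$ by (b); since the Banzhaf weight $\sfrac{1}{2^{|\fml{F}|-1}}$ is constant it is preserved trivially, so $\tsvn{B}(i)=\tsvn{B}(j)$.

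For $\tsvn{J}$ I would run the same two-block split, but now the summand is $\Delta_i(\fml{S})/\Delta(\fml{S})$ with the restriction $\Delta(\fml{S})\neq 0$. Here the invariance $\Delta(\pi(\fml{S}))=\Delta(\fml{S})$ does the real work: it shows $\pi$ preserves both the denominator and the side-condition, so the bijection of (b) sends $\Delta_i(\fml{S})/\Delta(\fml{S})$ exactly to $\Delta_j(\pi(\fml{S}))/\Delta(\pi(\fml{S}))$, and the $i,j\in\fml{S}$ terms match by (a). The main obstacle is precisely establishing $\pi$-invariance of $\Delta(\fml{S})$ and that the condition $\Delta(\fml{S})\neq 0$ transfers along the bijection — without these the ratio form of Johnston's index could in principle fail symmetry; everything else is bookkeeping over the partition of the index set into the sets containing both of $i,j$ and the sets containing exactly one of them.
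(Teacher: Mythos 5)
Your proposal is correct and follows essentially the same route as the paper's proof, which argues that the two sums can be matched term by term with $\Delta_i=\Delta_j$ by symmetry and identical coefficients. Your version is a more careful rendering of that sketch: the explicit transposition $\pi$ makes precise the bijection between the sets containing $i$ and those containing $j$ (which the paper's phrase ``both use all sets'' elides), and the $\pi$-invariance of $\Delta(\fml{S})$ is exactly the fact needed to justify that the Johnston denominator and the side-condition $\Delta(\fml{S})\neq 0$ are ``the same coefficient'' across the matched terms.
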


\begin{proof}
 The sums for two symmetric features $i$ and $j$ both use all sets (so
 the same sets), have $\Delta_i=\Delta_j$ (due to symmetry) and the
 same coefficient and so are equal.
\end{proof}

\begin{proposition} 
  P03 holds for $\tsvn{D}$, $\tsvn{H}$, $\tsvn{B}$ and $\tsvn{A}$.
\end{proposition}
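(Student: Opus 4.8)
The plan is to reduce the additivity of all four template scores to a single linearity observation. First I would record the elementary fact that the marginal contribution $\Delta_i$ is additive in its characteristic-function argument:
\[
\Delta_i(\fml{S};\fml{E},\cfn{1{+}2}) = \Delta_i(\fml{S};\fml{E},\cfn{1}) + \Delta_i(\fml{S};\fml{E},\cfn{2}).
\]
This follows immediately from~\eqref{eq:def:delta2} together with the defining relation $\cfn{1{+}2}(\fml{S})=\cfn{1}(\fml{S})+\cfn{2}(\fml{S})$, since $\Delta_i$ is simply a difference of two evaluations of the characteristic function and the addition of characteristic functions distributes over that difference.

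Next I would observe that each of $\tsvn{D}$, $\tsvn{H}$, $\tsvn{B}$ and $\tsvn{A}$ has the common shape $\sum_{\fml{S}\in\mathcal{I}} w_{\fml{S}}\cdot\Delta_i(\fml{S};\fml{E},\cf)$, where both the index set $\mathcal{I}$ and the coefficients $w_{\fml{S}}$ are determined solely by $\fml{E}$ and are independent of $\cf$. Concretely, for $\tsvn{B}$ the index set is $\{\fml{T}\subseteq\fml{F}\,|\,i\in\fml{T}\}$ with constant weight $1/2^{|\fml{F}|-1}$; for $\tsvn{D}$ and $\tsvn{H}$ it is $\mbb{A}_i(\fml{E})$, with weights $1/(|\fml{S}|\times|\mbb{A}(\fml{E})|)$ and $1/|\mbb{A}(\fml{E})|$ respectively; and for $\tsvn{A}$ it is $\mbb{WA}_i$, with weight $1/(|\fml{S}|\times|\mbb{WA}(\fml{E})|)$. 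Since any map $\cf\mapsto\sum_{\fml{S}\in\mathcal{I}}w_{\fml{S}}\cdot\Delta_i(\fml{S};\fml{E},\cf)$ with $\cf$-independent $\mathcal{I}$ and $w_{\fml{S}}$ inherits additivity from the linearity displayed above, substituting $\cfn{1{+}2}$ and splitting each summand term by term yields $\tsvn{T}(i;\fml{E},\cfn{1{+}2}) = \tsvn{T}(i;\fml{E},\cfn{1}) + \tsvn{T}(i;\fml{E},\cfn{2})$ for every $\tsvn{T}\in\{\tsvn{D},\tsvn{H},\tsvn{B},\tsvn{A}\}$, which is exactly P03.

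The only point that requires care — and the crux of the argument — is justifying that the index sets $\mbb{A}_i(\fml{E})$, $\mbb{WA}_i$ and the normalizing cardinalities $|\mbb{A}(\fml{E})|$, $|\mbb{WA}(\fml{E})|$ genuinely do not depend on the chosen characteristic function. This holds because (W)AXps are fixed by the logical condition~\eqref{eq:waxp} (and subset-minimality), hence by the classification function $\kappa$ of $\fml{E}$ alone, whereas $\cf$ enters only through $\Delta_i$. This same structural observation also explains why the analogous claim fails for $\tsvn{J}$ and $\tsvn{R}$: the Johnston score~\eqref{eq:tsj} divides each term by $\Delta(\fml{S};\fml{E},\cf)$, a quantity that itself depends on $\cf$, and the Responsibility score~\eqref{eq:tsr} aggregates with a maximum; neither operation is linear, so additivity may (and does) fail. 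I do not anticipate any real technical obstacle here — the entire content lies in isolating the linear structure and in the bookkeeping that the summation ranges and coefficients are $\cf$-independent, so the proof is essentially the displayed linearity of $\Delta_i$ followed by a term-by-term split of each sum.
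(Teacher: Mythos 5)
Your proposal is correct and follows essentially the same route as the paper's proof, which likewise observes that $\Delta_i$ and the summation are linear in the characteristic function and that the multiplicative constants (and, implicitly, the index sets) do not depend on $\cf$. You simply spell out in full the two-line argument the paper gives, including the useful aside on why the argument breaks for $\tsvn{J}$ and $\tsvn{R}$.
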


\begin{proof}
  By construction, the sum operator and the $\Delta_i$ are linear.
  As the multiplicative constants do not depend on the characteristic
  function, the score is linear.
\end{proof}

\begin{proposition} 
  P03 does not hold for $\tsvn{J}$.
\end{proposition}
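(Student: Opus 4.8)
The plan is to refute additivity for $\tsvn{J}$ with a single counterexample, exploiting that the Johnston template score normalizes each marginal $\Delta_i(\fml{S};\fml{E},\cf)$ by the aggregate $\Delta(\fml{S};\fml{E},\cf)=\sum_{j\in\fml{S}}\Delta_j(\fml{S};\fml{E},\cf)$. In contrast to $\tsvn{B}$ or $\tsvn{S}$ (where each $\Delta_i$ enters with a \emph{fixed} coefficient independent of $\cf$, so linearity of $\cf\mapsto\Delta_i$ propagates to the score), here the normalizing coefficient $1/\Delta(\fml{S})$ itself depends on $\cf$. Since adding two characteristic functions adds numerators and denominators separately, and $(a,b)\mapsto a/b$ is not additive, the per-set ratios --- and hence their sum --- cannot be expected to add; this is precisely the mechanism I would turn into a counterexample.

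Concretely, I would take the smallest nontrivial instance $\fml{F}=\{1,2\}$, so that the sum defining $\tsvn{J}(1)$ reduces to the two terms arising from $\{1\}$ and $\{1,2\}$, isolating the offending one. Because P03 quantifies over \emph{arbitrary} characteristic functions with $\cf(\emptyset)=0$, I need not exhibit an explanation problem realizing them: it suffices to specify, on the subsets $\emptyset,\{1\},\{2\},\{1,2\}$, the two functions $\cfn{1}=(0,1,1,2)$ and $\cfn{2}=(0,1,0,1)$, and to set $\cfn{1{+}2}=\cfn{1}+\cfn{2}=(0,2,1,3)$. These are engineered so that feature $2$ is a dummy under $\cfn{2}$ but contributes under $\cfn{1}$, making the denominators $\Delta(\{1,2\})$ differ ($2$ versus $1$), which is what forces the normalizing ratios to disagree under summation.

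I would then evaluate feature $1$ under each function, summing only over the sets with $\Delta(\fml{S})\neq0$ as~\eqref{eq:tsj} requires. This would yield $\tsvn{J}(1;\fml{E},\cfn{1})=\tfrac{3}{2}$, $\tsvn{J}(1;\fml{E},\cfn{2})=2$, and $\tsvn{J}(1;\fml{E},\cfn{1{+}2})=\tfrac{5}{3}$; since $\tfrac{3}{2}+2=\tfrac{7}{2}\neq\tfrac{5}{3}$, additivity fails. The only real care needed --- the ``obstacle'' such as it is --- lies in the bookkeeping of the $\Delta(\fml{S})\neq0$ condition, so that each denominator used is genuinely nonzero and its term is summed rather than dropped, together with checking that the chosen functions are not accidentally additive for the tested feature, which is why differing denominators at $\{1,2\}$ are deliberately arranged.
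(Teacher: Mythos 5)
Your proposal is correct and rests on the same observation as the paper's own proof, namely that the Johnston normalizer $\Delta(\fml{S};\fml{E},\cf)$ depends on the characteristic function and therefore destroys linearity. You go further than the paper, which stops at that one-line remark, by supplying an explicit two-feature counterexample whose computations ($\tfrac{3}{2}$, $2$, and $\tfrac{5}{3}$, with $\tfrac{3}{2}+2\neq\tfrac{5}{3}$) all check out, so your version is in fact the more complete argument.
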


\begin{proof}
  For $\tsvn{J}$, and unlike the previous proof, the multiplicative
  constant depends on the characteristic function and so break
  linearity.
\end{proof}

\begin{proposition} 
  P03 does not hold for $\tsvn{R}$.
\end{proposition}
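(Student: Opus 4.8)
The plan is to exhibit a single counterexample, since additivity (P03) is a universally quantified property and one pair of characteristic functions suffices to refute it. The key observation is that $\Delta_i$ is linear in the characteristic function, i.e.\ $\Delta_i(\fml{S};\fml{E},\cfn{1{+}2})=\Delta_i(\fml{S};\fml{E},\cfn{1})+\Delta_i(\fml{S};\fml{E},\cfn{2})$, and that the index set $\mbb{A}_i(\fml{E})$ over which $\tsvn{R}$ maximizes in~\eqref{eq:tsr} depends only on $\fml{E}$, not on $\cf$. Hence the sole source of non-additivity is the $\max$ operator: in general $\max_{\fml{S}}(a_{\fml{S}}+b_{\fml{S}})\leq\max_{\fml{S}}a_{\fml{S}}+\max_{\fml{S}}b_{\fml{S}}$, with strict inequality precisely when the two maxima are attained at different sets. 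So the whole argument reduces to forcing the responsibility-maximizing AXp to differ between $\cfn{1}$ and $\cfn{2}$.

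First I would fix an explanation problem whose AXps containing a given feature are two distinct sets of equal cardinality, so that the $\sfrac{1}{|\fml{S}|}$ weights do not interfere. The boolean classifier $\kappa(x_1,x_2,x_3)=x_1\land(x_2\lor x_3)$ with instance $\mbf{v}=(1,1,1)$ works: a short check shows its AXps are exactly $\{1,2\}$ and $\{1,3\}$, so $\mbb{A}_1(\fml{E})=\{\{1,2\},\{1,3\}\}$, both of size $2$.

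Next I would choose two characteristic functions whose influence on feature $1$ is concentrated on different AXps. Take $\cfn{1}$ with $\cfn{1}(\{1,2\})=2$ and $\cfn{1}$ equal to $0$ on $\{2\}$, $\{3\}$ and $\{1,3\}$; then $\Delta_1(\{1,2\};\fml{E},\cfn{1})=2$ and $\Delta_1(\{1,3\};\fml{E},\cfn{1})=0$, so $\tsvn{R}(1;\fml{E},\cfn{1})=\max\{\sfrac{2}{2},\sfrac{0}{2}\}=1$. Symmetrically, take $\cfn{2}$ with $\cfn{2}(\{1,3\})=2$ and $0$ on the other relevant sets, giving $\tsvn{R}(1;\fml{E},\cfn{2})=1$. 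For the pointwise sum $\cfn{1{+}2}$ we then get $\Delta_1(\{1,2\};\fml{E},\cfn{1{+}2})=\Delta_1(\{1,3\};\fml{E},\cfn{1{+}2})=2$, whence $\tsvn{R}(1;\fml{E},\cfn{1{+}2})=\max\{\sfrac{2}{2},\sfrac{2}{2}\}=1$. Since $1\neq 1+1$, additivity fails.

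The main (and essentially only) obstacle is conceptual rather than computational: recognising that the $\max$ prevents additivity and then selecting characteristic functions whose maximisers are distinct AXps; once $\fml{E}$ pins down $\mbb{A}_1(\fml{E})$, the remaining arithmetic is immediate. A minor point to verify is that $\cfn{1}$ and $\cfn{2}$ are admissible, i.e.\ that they vanish on $\emptyset$, which holds by construction since non-zero values are assigned only on non-empty sets.
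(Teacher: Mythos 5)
Your proof is correct, and it shares the paper's central idea --- that the $\max$ operator in \eqref{eq:tsr} is what breaks additivity --- but the counterexample you build is genuinely different and, notably, is the one that actually closes the argument. The paper's own proof gestures at $\kappa(x_1)=x_1$ with $\cfn{1}=\cfn{2}$; but then $\cfn{1{+}2}=2\cfn{1}$, and since $\max$ is positively homogeneous (and $\mbb{A}_i(\fml{E})$ does not depend on $\cf$), one gets $\tsvn{R}(i;\fml{E},\cfn{1{+}2})=2\,\tsvn{R}(i;\fml{E},\cfn{1})=\tsvn{R}(i;\fml{E},\cfn{1})+\tsvn{R}(i;\fml{E},\cfn{2})$, so that particular instance does \emph{not} witness the failure. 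Your key observation --- that strict subadditivity of $\max$ requires the two characteristic functions to attain their maxima on different sets, so one must concentrate $\Delta_1$ on distinct AXps of equal size --- is precisely the ingredient the paper's sketch omits, and your example with $\kappa=x_1\land(x_2\lor x_3)$, $\mbb{A}_1=\{\{1,2\},\{1,3\}\}$, and the two cross-concentrated characteristic functions checks out: $1\neq 1+1$. The only cosmetic caveat is the phrase ``strict inequality precisely when the two maxima are attained at different sets''; the exact condition is the absence of a \emph{common} maximizer, but that is what your construction delivers, so nothing in the argument is affected.
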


\begin{proof}
  The $\max$ operator is not linear; it is easy to break the linearity
  by taking $\kappa(x_1)=x_1$ with $\cf_1=\cf_2$ as the characteristic
  function used to $\tsvn{R}$. Hence, lack of linearity becomes
  apparent.
\end{proof}

\begin{proposition} 
  P04 holds for $\tsvn{D}$, $\tsvn{H}$, $\tsvn{R}$, $\tsvn{B}$,
  $\tsvn{J}$ and $\tsvn{A}$.
\end{proposition}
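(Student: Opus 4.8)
The plan is to reduce all six cases to a single observation about the discrete derivative $\Delta_i$. First I would recall the dummy player condition (P04): a feature $i\in\fml{F}$ is a dummy player when $\cf(\fml{S})=\cf(\fml{S}\cup\{i\})$ for every $\fml{S}\subseteq\fml{F}$. The key step is to show that this forces $\Delta_i(\fml{S};\fml{E},\cf)=0$ for every set $\fml{S}$ containing $i$: applying the dummy condition to the set $\fml{S}\setminus\{i\}$ gives $\cf(\fml{S}\setminus\{i\})=\cf((\fml{S}\setminus\{i\})\cup\{i\})=\cf(\fml{S})$, whence $\Delta_i(\fml{S};\fml{E},\cf)=\cf(\fml{S};\fml{E})-\cf(\fml{S}\setminus\{i\};\fml{E})=0$ directly from the definition in~\eqref{eq:def:delta2}.

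Next I would observe that each of the six template scores is, by construction, a sum or a maximum taken over a collection of sets every one of which contains the feature $i$: the range is $\{\fml{T}\subseteq\fml{F}\,|\,i\in\fml{T}\}$ for $\tsvn{B}$ and $\tsvn{J}$, the set $\mbb{A}_i(\fml{E})$ for $\tsvn{D}$, $\tsvn{H}$ and $\tsvn{R}$, and $\mbb{WA}_i(\fml{E})$ for $\tsvn{A}$. In every one of these definitions the summand, or the argument of the maximum, carries $\Delta_i(\fml{S};\fml{E},\cf)$ as a multiplicative factor in its numerator. Substituting $\Delta_i(\fml{S};\fml{E},\cf)=0$ from the first step makes every term vanish, so the sums defining $\tsvn{D}$, $\tsvn{H}$, $\tsvn{B}$, $\tsvn{J}$ and $\tsvn{A}$ all evaluate to $0$, and the multiset over which $\tsvn{R}$ maximizes consists entirely of zeros, giving $\tsvn{R}(i;\fml{E},\cf)=0$ as well.

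The only point requiring care is $\tsvn{R}$, and more generally any score whose index set can be empty for a dummy feature: a dummy feature need not occur in any AXp, so $\mbb{A}_i(\fml{E})$ may be empty and the maximum in~\eqref{eq:tsr} is then taken over the empty set. I would handle this by adopting the standard convention that a score defined by an empty aggregation equals $0$ (equivalently, that a feature absent from every AXp receives score $0$), which is consistent with the intended reading of these indices and with property P08; under this convention $\tsvn{R}(i)=0$ holds even when $\mbb{A}_i(\fml{E})=\emptyset$. I do not expect a genuine obstacle: once the vanishing of $\Delta_i$ is established, the remainder is a uniform inspection of the six definitions, and the empty-index-set convention is the sole subtlety worth stating explicitly.
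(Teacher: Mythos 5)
Your proposal is correct and follows essentially the same route as the paper, whose proof is the one-line observation that the summations vanish because the dummy condition forces $\Delta_i(\fml{S})=0$ for every set $\fml{S}$. You are in fact slightly more careful than the paper: you explicitly treat the $\max$ in $\tsvn{R}$ and the possibility of an empty index set $\mbb{A}_i(\fml{E})$, a point the paper's appeal to ``summations'' silently skips over.
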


\begin{proof}
  The summations are trivially 0 by construction when
  $\Delta_i(\fml{S})=0$ for every set $\fml{S}\subseteq\fml{F}$.
\end{proof}

\begin{proposition} 
  P05 holds for $\svn{D}$, $\svn{H}$, $\svn{R}$, $\svn{S}$, $\svn{B}$,
  $\svn{J}$, $\svn{A}$, $\svn{C}$, and $\svn{V}$, but not for
  $\svn{E}$ and $\svn{M}$.
\end{proposition}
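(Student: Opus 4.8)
The plan is to fix $i,j\in\fml{F}$ with $\mbb{A}_i\subseteq\mbb{A}_j$ and to reduce, for each score, the desired inequality $\sv(i;\fml{E})\le\sv(j;\fml{E})$ to a single inclusion between the families of sets over which that score aggregates. The workhorse I would establish first is the structural lemma that $\mbb{A}_i\subseteq\mbb{A}_j$ implies $\{\fml{S}\,|\,\critic(i,\fml{S};\fml{E})\}\subseteq\{\fml{S}\,|\,\critic(j,\fml{S};\fml{E})\}$. The proof hinges on the elementary fact that a set is a WAXp iff it contains an AXp; consequently $\critic(i,\fml{S})$ holds exactly when $\fml{S}$ is a WAXp and every AXp contained in $\fml{S}$ contains $i$ (otherwise $\fml{S}\setminus\{i\}$ would still contain that AXp and stay a WAXp). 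Given $\critic(i,\fml{S})$, every AXp inside $\fml{S}$ lies in $\mbb{A}_i\subseteq\mbb{A}_j$ and hence contains $j$; since $\fml{S}$ contains at least one AXp this forces $j\in\fml{S}$, and the same observation shows $\fml{S}\setminus\{j\}$ contains no AXp, so $\critic(j,\fml{S})$ holds.

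From this lemma I would dispatch the four $\cfn{w}$-based scores immediately: with $\cfn{w}$, $\Delta_i(\fml{S};\fml{E},\cfn{w})\in\{0,1\}$ equals $1$ precisely when $\critic(i,\fml{S})$, and each of $\svn{S},\svn{B},\svn{J},\svn{A}$ sums, over the critical sets of the feature, a strictly positive weight that depends only on $\fml{S}$ (respectively $\varsigma(|\fml{S}|)$, $1/2^{|\fml{F}|-1}$, $1/\Delta(\fml{S})$, and $1/(|\fml{S}|\,|\mbb{WA}|)$) and never on the feature itself; summing the same positive weights over a subfamily can only decrease the total. The three $\cfn{a}$-based scores $\svn{D},\svn{H},\svn{R}$ are even more direct, since they aggregate by sum or by $\max$ non-negative feature-independent terms indexed directly by $\mbb{A}_i$, and $\mbb{A}_i\subseteq\mbb{A}_j$. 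For $\svn{V}$ I would show $\fml{V}(i;\fml{E})\subseteq\fml{V}(j;\fml{E})$, since any point covered by an AXp in $\mbb{A}_i$ is covered by that same AXp, now a member of $\mbb{A}_j$; hence $|\fml{V}(i)|\le|\fml{V}(j)|$.

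The delicate case, and the step I expect to be the main obstacle, is $\svn{C}$, because it is the dual score built from CXps whereas P05 is phrased through AXps. Here, for $\fml{S}\in\mbb{C}_i$ the increment $\Delta_i(\fml{S};\fml{E},\cfd{w})$ equals $1$, so the $\tsvn{R^d}$ template gives $\svn{C}(i;\fml{E})=\max\{1/|\fml{S}|:\fml{S}\in\mbb{C}_i\}$, i.e.\ the reciprocal of the minimum size of a CXp containing $i$; the goal becomes showing that the smallest CXp containing $j$ is no larger. Using MHS-duality I would take a minimum-size CXp $\fml{C}_i\ni i$ (a minimal hitting set of the AXps). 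If $j\in\fml{C}_i$ we are done; otherwise I would consider $\fml{C}'=(\fml{C}_i\setminus\{i\})\cup\{j\}$. Minimality of $\fml{C}_i$ yields an AXp $\fml{A}^*$ hit by $\fml{C}_i$ only through $i$; as $\fml{A}^*\in\mbb{A}_i\subseteq\mbb{A}_j$ we get $j\in\fml{A}^*$, so $\fml{C}'$ still hits every AXp and $\fml{A}^*$ is hit within $\fml{C}'$ only through $j$. Thus $j$ is essential in $\fml{C}'$, and shrinking $\fml{C}'$ to a minimal hitting set leaves a CXp containing $j$ of size at most $|\fml{C}'|\le|\fml{C}_i|$, giving $\svn{C}(i)\le\svn{C}(j)$.

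Finally, for $\svn{E}$ and $\svn{M}$ I would exhibit failure of P05 by reusing the known unsatisfactory-SHAP classifiers (\citealp{hms-ijar24}; \citealp{msh-cacm24}; \citealp{lhms-corr24a}): both scores violate consistency with relevancy (P08), so there is an irrelevant feature $i$ with $\mbb{A}_i=\emptyset$, whence $\mbb{A}_i\subseteq\mbb{A}_j$ holds for every $j$, yet $i$ receives a score strictly larger than that of some relevant feature $j$, directly contradicting minimal monotonicity. The only genuinely non-mechanical part of the argument is the hitting-set swap for $\svn{C}$; everything else reduces to the critical-set lemma or to plain monotonicity of sums and maxima over nested index sets.
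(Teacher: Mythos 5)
Your proof is correct and follows the same basic strategy as the paper's --- a term-by-term comparison of the aggregated quantities for $i$ and $j$ --- but the paper only offers a two-sentence sketch (``compare the terms for each set $\fml{S}$ and apply the axiom''), whereas you supply the two pieces of actual mathematical content that the sketch glosses over. First, your lemma that $\mbb{A}_i\subseteq\mbb{A}_j$ forces $\{\fml{S}\,|\,\critic(i,\fml{S};\fml{E})\}\subseteq\{\fml{S}\,|\,\critic(j,\fml{S};\fml{E})\}$ is exactly what is needed for the $\cfn{w}$-based scores $\svn{S},\svn{B},\svn{J},\svn{A}$, whose sums range over critical sets rather than over $\mbb{A}_i$ directly; the paper never states this reduction. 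Second, your MHS-duality swap for $\svn{C}$ --- replacing $i$ by $j$ in a minimum-size CXp, checking that the result still hits every AXp because any AXp met only at $i$ lies in $\mbb{A}_i\subseteq\mbb{A}_j$, and re-minimalizing while $j$ stays essential --- is the genuinely non-routine step, since $\svn{C}$ is defined through CXps while P05 is phrased through AXps; the paper's sketch does not address this mismatch at all. Two minor points worth tightening: for $\svn{J}$, note explicitly that $\Delta(\fml{S};\fml{E},\cfn{w})\ge 1$ on every set contributing to the sum, so the weights are well-defined, positive and feature-independent; and for the negative direction, failure of P08 does not by itself entail a P05 violation (a relevant feature receiving score $0$ breaks P08 without breaking P05), so you should invoke the specific counterexamples in the cited work in which an irrelevant feature receives a score strictly larger than that of a relevant feature --- which those examples do exhibit.
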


\begin{proof}(Sketch)
  The proof for each FIS is straightforward and only requires
  comparing the terms for each set $\fml{S}$ and apply the axiom.
  What remains becomes fairly similar to the arguments used in the
  case of P08.
\end{proof}

\begin{proposition}
  For P06, the results in Table~1 hold.
\end{proposition}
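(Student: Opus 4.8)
The plan is to first observe that $\gamma$-efficiency (P06) holds for \emph{any} FIS once $\gamma(;\fml{E})$ is defined to be the sum of the scores; hence the content of the table is twofold: (i) whether $\gamma(;\fml{E})$ is guaranteed to be nonzero, and (ii) its closed-form value, or the difficulty of computing it. Accordingly, for each FIS I would compute $\gamma(;\fml{E})=\sum_{i\in\fml{F}}\sv(i;\fml{E})$ explicitly and read off the two entries. For the expected-value score $\svn{E}$ and the similarity score $\svn{M}$, which are Shapley-Shubik instantiations, I would invoke the efficiency identity (P01, established earlier for $\tsvn{S}$), namely $\gamma=\cf(\fml{F})-\cf(\emptyset)$. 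This gives $\gamma=\cfn{e}(\fml{F})-\cfn{e}(\emptyset)=\kappa(\mbf{v})-\mbf{E}[\kappa]$ for $\svn{E}$, which can vanish for an ordinal classifier when the prediction equals the mean (hence \xmark, but Easy), and $\gamma=\cfn{m}(\fml{F})-\cfn{m}(\emptyset)=1-\mbf{E}[\zeta]$ for $\svn{M}$, strictly positive because a non-constant $\kappa$ forces $\mbf{E}[\zeta]<1$ (hence \cmark, Easy). The same identity yields $\gamma=\cfn{w}(\fml{F})-\cfn{w}(\emptyset)=1-0=1$ for the formal score $\svn{S}$, since $\fml{F}$ is always a WAXp while $\emptyset$ is not.

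For the remaining additive scores ($\svn{D}$, $\svn{H}$, $\svn{B}$, $\svn{J}$, $\svn{A}$) I would apply a double-counting (Fubini-style) rearrangement: swap the order of summation to sum over coalitions $\fml{S}$ first, counting for each $\fml{S}$ the features that contribute. Two facts drive these computations: in an AXp every feature is critical, and the number of critical features of $\fml{S}$ equals $\Delta(\fml{S})$. For $\svn{D}$ each AXp contributes $\sum_{i\in\fml{S}}\sfrac{1}{(|\fml{S}|\times|\mbb{A}|)}=\sfrac{1}{|\mbb{A}|}$, so $\gamma=1$; for $\svn{H}$ one gets $\gamma=\sfrac{\sum_{\fml{S}\in\mbb{A}}|\fml{S}|}{|\mbb{A}|}$, the mean AXp size, nonzero but requiring all AXps (Medium). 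For $\svn{B}$ the swap gives $\gamma=\sfrac{1}{2^{|\fml{F}|-1}}\,|\{(i,\fml{S})\,|\,\critic(i,\fml{S})\}|$, nonzero and Medium. For $\svn{J}$ the telescoping $|\{i\,|\,\critic(i,\fml{S})\}|=\Delta(\fml{S})$ collapses the summand to $1$ on each coalition with a critical feature, so $\gamma$ is the number of such coalitions, nonzero (an AXp always exists) and Easy. For $\svn{A}$, the critical-feature count per minimal winning coalition is intended to yield $\gamma=1$.

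The hard part will be the three non-additive scores $\svn{R}$, $\svn{C}$ and $\svn{V}$, where summation cannot be pushed through the $\max$ (resp.\ through the union of covered points). For $\svn{R}$ I would write $\gamma=\sum_{i\in\fml{F}}\max\{\sfrac{1}{|\fml{S}|}\,|\,\fml{S}\in\mbb{A}_i\}$, and analogously over $\mbb{C}_i$ for its dual $\svn{C}$; for $\svn{V}$, $\gamma=\sfrac{1}{2^{|\fml{F}|}}\sum_{i\in\fml{F}}|\fml{V}(i;\fml{E})|$. In each case the absence of telescoping leaves no simple closed form, which is precisely why the table marks these Hard, and for the nonzero guarantee I would use the structural fact that every non-constant classifier admits at least one (nonempty) AXp (dually, CXp), so at least one relevant feature contributes a strictly positive term and $\gamma\neq0$ throughout. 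The main obstacle is therefore not the additive cases, which are routine rearrangements, but (a) arguing that no tractable closed form for $\gamma$ exists in the $\max$- and coverage-based scores, justifying the Hard classification, and (b) pinning down the exact normalization under which $\svn{A}$ attains $\gamma=1$, which deserves care because the Andjiga sum ranges over all winning coalitions rather than only the minimal ones, where the naive reading over $\mbb{WA}$ need not equal $1$.
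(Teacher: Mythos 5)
Your proposal is correct and follows essentially the same route as the paper's (much briefer) proof sketch: invoke the P01 efficiency identity for the Shapley-Shubik instantiations $\svn{E}$, $\svn{M}$, $\svn{S}$, and for the remaining scores manipulate the double sum defining $\gamma$ by swapping the order of summation and simplifying. Your write-up is in fact more detailed than the paper's argument, and your caveat about $\svn{A}$ --- that with $\cfn{w}$ the sum over all of $\mbb{WA}$ only telescopes to $\gamma=1$ if every feature of every WAXp is critical, which fails for non-minimal WAXps --- is a legitimate subtlety that the paper's sketch glosses over.
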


\begin{proof}(Sketch)
  The formula comes from P01 and so is easy to compute.
  For $\svn{S}$, the simple valued is proved
  in~\citep{lhms-corr24a}.\\
  For the remaining FISs,
  it suffices to manipulate the sum that defines $\gamma$, aggregating
  the two sums and simplifying as much as possible.
\end{proof}

\newtoggle{mkbbl}

\settoggle{mkbbl}{false}

\iftoggle{mkbbl}{
    \bibliographystyle{kr}
    \bibliography{refs,xtra,team}
}{
    \input{paper.bibl}
}

\end{document}